\newcommand{\myitem}[1]{%
\item[#1]\protected@edef\@currentlabel{#1}%
}
\definecolor{VIColor}{HTML}{D55E00}
\definecolor{GVIColor}{HTML}{0072B2}
\definecolor{GVIColor1}{HTML}{0072B2}
\definecolor{GVIColor2}{HTML}{56B4E9}
\definecolor{FVIColor}{HTML}{009E73}
\definecolor{DVIColor}{HTML}{009E73}
\newcommand{\bbR}{\mathbb{R}}
\newcommand{\bbN}{\mathbb{N}}
\newcommand{\calQ}{\mathcal{Q}}
\newcommand{\KL}{\operatorname{KL}}
\newcommand{\MMD}{\operatorname{MMD}}
\DeclareMathOperator*{\argmin}{arg\,min}
\theoremstyle{plain}%
\newtheorem{theorem}{Theorem}
\newtheorem{lemma}{Lemma}
\theoremstyle{definition}
\newtheorem{definition}{Definition}
\newtheorem{remark}{Remark}
\begin{document}

\title{Regularisation in the Space of Probability Measures: Deep Ensembles and the Wasserstein Gradient Flows}

\title{ 
A Rigorous Link between Deep Ensembles and (Variational) Bayesian Methods
}

\author{
Veit D. Wild\thanks{veit.wild@stats.ox.ac.uk} \\ 
University of Oxford
\And
Sahra Ghalebikesabi \\ 
University of Oxford
\And
Dino Sejdinovic \\ 
University of Adelaide
\And
Jeremias Knoblauch \\ 
University College London
}

\maketitle

\begin{abstract}%
We establish the first mathematically rigorous link between Bayesian, variational Bayesian, and ensemble methods. 
A key step towards this is to reformulate the non-convex optimisation problem typically encountered in deep learning as a convex optimisation in the space of probability measures.
On a technical level, our contribution amounts to studying generalised variational inference 
through the lens of Wasserstein gradient flows.
The result is a unified theory of various seemingly disconnected approaches that are commonly used for uncertainty quantification in deep learning---including deep ensembles and (variational) Bayesian methods.
This offers a fresh perspective on the reasons behind the success of deep ensembles over procedures based on 
standard variational inference, and allows the derivation of new ensembling schemes with convergence guarantees.
We showcase this by proposing a family of interacting deep ensembles with direct parallels to the interactions of particle systems in thermodynamics, and use our theory to prove the convergence of these algorithms to a well-defined global minimiser on the space of probability measures.

\end{abstract}

\section{Introduction}

A major challenge in modern deep learning is the accurate quantification of uncertainty. To develop trustworthy AI systems, it will be crucial for them to recognize their own limitations and to convey the inherent uncertainty in the predictions.
Many different approaches have been suggested for this. In variational inference (VI), a prior distribution for the weights and biases in the neural network is assigned and the best approximation to the Bayes posterior is selected from a class of parameterised distributions \citep{graves2011practical,blundell2015weight,gal2016dropout,louizos2017multiplicative}. An alternative approach is to (approximately) generate samples from the Bayes posterior via Monte Carlo methods \citep{welling2011bayesian,neal2012bayesian}.  Deep ensembles are another approach, and rely on a train-and-repeat heuristic to quantify uncertainty \citep{lakshminarayanan2017simple}. 

Much ink has been spilled over whether one can see deep ensembles  as a Bayesian procedure \citep{wilson2020case,izmailov2021bayesian,d2021repulsive} and over how these seemingly different methods might relate to each other. Building on this discussion, we shed further light on the connections between Bayesian inference and deep ensemble techniques by taking a different vantage point. In particular, we show that methods as different as variational inference, Langevin sampling \citep{ermak1975computer}, and deep ensembles can be derived from a {well-studied} generally infinite-dimensional regularised optimisation problem over the space of probability measures {\citep[see e.g.][]{guedj2019primer, knoblauch2019generalized}}.
As a result, we find that the differences between these algorithms map directly onto different choices regarding this  optimisation problem.
Key differences between the algorithms boil down to different choices of regularisers, and whether they implement a finite-dimensional or infinite-dimensional gradient descent.
Here, finite-dimensional gradient descent corresponds to parameterised VI schemes, whilst the infinite-dimensional case maps onto ensemble methods.

The contribution of this paper is a new theory that generates insights into existing algorithms and provides links between them that are mathematically rigorous, unexpected, and useful. 
On a technical level, our innovation consists in analysing them as algorithms that target an optimisation problem in the space of probability measures through the use of a powerful technical device: the Wasserstein gradient flow \citep[see e.g.][]{ambrosio2005gradient}. 
While the theory is this paper's main concern, its potentially  substantial methodological payoff is  demonstrated through the derivation of a new inference algorithm based on gradient descent in infinite dimensions and regularisation with the maximum mean discrepancy.
We use our theory to show that this algorithm---unlike standard deep ensembles---is derived from a strictly convex objective defined over the space of probability measures. 
Thus, it targets a unique minimum, and
is capable of producing samples from this global minimiser in the infinite particle and time horizon limit. 
This makes the algorithm  provably convergent; and we hope that it can help plant the seeds for renewed innovations in theory-inspired algorithms for (Bayesian) deep learning.

The paper proceeds as follows: Section \ref{sec:prob_lifting} discusses the advantages of lifting losses defined on Euclidean spaces into the space of probability measures through a generalised variational objective. Section \ref{sec:gradient-flows} introduces the notion of Wasserstein gradient flows, while Section \ref{sec:optimisation_pm} links them to the aforementioned objective and explains how they can be used to construct algorithms that bridge Bayesian and ensemble methods. The paper concludes with Section \ref{sec:experiments}, where the findings of the paper are illustrated numerically.

\section{Convexification through probabilistic lifting}\label{sec:prob_lifting}

One of the most technically challenging aspects of contemporary machine learning theory is that the losses $\ell:\mathbb{R}^J \rightarrow \mathbb{R}$ we wish to minimise are often highly non-convex. 
For instance, one could wish to minimise $\ell(\theta) := \frac{1}{N} \sum_{n=1}^N \big( y_n - f_{\theta}(x_n) \big)^2$ where $\{(x_n,y_n) \}_{n=1}^N$ is a set of paired observations and $f_{\theta}$ a neural network with parameters $\theta$. 
While deep learning has shown that non-convexity is  often a negligible \textit{practical} concern,  it makes it near-impossible to prove many basic \textit{theoretical} results that a good learning theory is concerned with, as $\ell$ has many local (or global) minima \citep{fort2019deep,wilson2020bayesian}.
We reintroduce convexity by lifting the problem onto a {computationally} more challenging space.
In this sense, the price we pay for {the convenience of} convexity is the transformation of a finite-dimensional problem into an infinite-dimensional one{, which is numerically more difficult to tackle}.
Figure \ref{fig:prob-lifting} illustrates our approach: 
\begin{figure}[h!]
    \centering
    \includegraphics[width=0.9\textwidth]{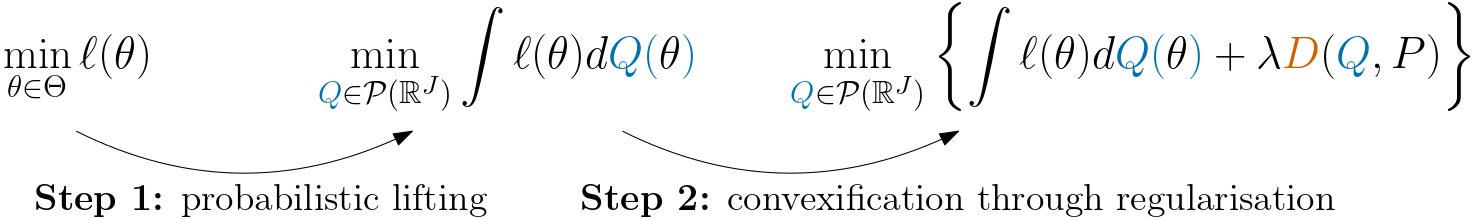}
    \caption{Illustration of convexification through probabilistic lifting.}
    \label{fig:prob-lifting}
\end{figure}

First, we transform a  non-convex optimisation $\min_{\theta \in \Theta} \ell(\theta)$ into an infinite-dimensional optimisation  over the set of probability measures $\mathcal{P}(\mathbb{R}^J)$, 
yielding $\min_{{Q} \in \mathcal{P}(\mathbb{R}^J)}\int \ell(\theta)dQ(\theta)$.
As an integral, this objective is linear in $Q$.
However, linear functions are not strictly convex.  We therefore need to add a strictly convex regulariser to ensure uniqueness of the minimiser.\footnote{
To illustrate why this is necessary, assume that there are $\theta_A $ and $\theta_B$ in $\bbR^J$ so that $\ell(\theta_A) = \ell(\theta_B) = \min_{\theta \in \Theta}\ell(\theta)$. Due to linearity, each measure $Q_t \in \mathcal{P}(\bbR^J)$, $t \in [0,1]$, defined as $
    Q_t : = (1-t) \delta_{\theta_A} + t \delta_{\theta_B}$ 
  provides one (of the infinitely many) global minima in the set $\argmin_{Q \in \mathcal{P}(\mathbb{R}^J)} \int \ell(\theta) \, dQ(\theta)$.}
We prove in Appendix \ref{ap:Existence_and_uniqueness} that indeed---for a regulariser
$D: \mathcal{P}(\bbR^J) \times \mathcal{P}(\bbR^J) \to [0, \infty]$ such that $(Q, P) \mapsto D(Q,P)$ is strictly convex and $P \in \mathcal{P}(\bbR^J)$ a fixed measure---existence and uniqueness of a global minimiser can be guaranteed.
Given a scaling constant $\lambda > 0$, we can now put everything together to obtain the loss $L$ and the unique minimiser $Q^*$ as
\begin{align}
   L(Q):= \int \ell(\theta) \, dQ(\theta)   + \lambda  D(Q,P), \quad \quad 
  Q^* : = \argmin_{Q \in \mathcal{P}(\mathbb{R}^J)}L(Q). \label{eq:regularised_problem}
\end{align}
Throughout, whenever $Q$ and $Q^*$ have an associated Lebesgue density, we write them as $q$ and $q^*$.
{Moreover, all measures, densities, and integrals will be defined on the parameter space $\mathbb{R}^J$ of $\theta$. 
Similarly, the gradient operator $\nabla$ will exclusively denote differentiation with respect to 
$\theta \in \mathbb{R}^J$.
}

\subsection{One objective with many interpretations }

In the current paper, our sole focus lies on resolving the difficulties associated with non-convex optimisation of $\ell$ on Euclidean spaces. 
Through probabilistic lifting and convexification, we can identify a unique minimiser $Q^*$ in the new space, which minimises the $\theta$-averaged loss $\theta \mapsto \ell(\theta)$ without deviating too drastically from some reference measure $P$. 
In this sense, $Q^*$ summarises the quality of all (local and global) minimisers of $\ell(\theta)$ by assigning them a corresponding weight.
The choices for $\lambda$, $D$ and $P$ determine the trade-off between the initial loss $\ell$ and reference measure $P$ and therefore the weights we assign to different solutions.

Yet, \eqref{eq:regularised_problem} is not a new problem form: it has various interpretations, depending on the choices for $D, \lambda, \ell$ and the framework of analysis \citep[see e.g.][for a discussion]{knoblauch2019generalized}. 
For example, if $\ell(\theta)$ is a negative log likelihood, $D$ is the Kullback-Leibler divergence ($\KL$), and $\lambda = 1$,  $Q^*$ is the \textbf{standard Bayesian} posterior, and $P$ is the Bayesian prior. 
This interpretation of $P$ as a prior carries over to  \textbf{generalised Bayesian} methods, in which we can choose $\ell(\theta)$ to be any loss, $D$ to be any divergence on $\mathcal{P}(\bbR^J)$, and $\lambda$ to regulate how fast we  learn from data \citep[see e.g.][]{bissiri2016general, Jewson, RBOCPD, DunsonCoarsening, GVIConsistency, alquier2021non, husain2022adversarial, matsubara2022robust, wild2022generalized, wu2023comparison, altamirano2023robust}.
{
In essence, the core justification for these generalisations is that the very assumptions justifying application of Bayes' Rule are violated in modern machine learning.
In practical terms, this results in a view of Bayes' posteriors as one---of many  possible---measure-valued estimators $Q^*$ of the form in \eqref{eq:regularised_problem}.
Once this vantage point is taken, it is not clear why one \textit{should} be limited to using only one particular type of loss and regulariser for \textit{every} possible problem.
Seeking a parallel with optimisation on Euclidean domains, one may then compare the orthodox Bayesian view with the insistence on only using quadratic regularisation for \textit{any} problem.
While it is beyond the scope of this paper to cover these arguments in depth, we refer the interested reader to \citet{knoblauch2019generalized}.
}

A second line of research featuring objectives as in \eqref{eq:regularised_problem} are  \textbf{PAC-Bayes} methods, whose aim is to construct generalisation bounds \citep[see e.g.][]{ShaweTaylor,McAllester1,McAllester2, SafeLearning}. 
Here, $\ell$ is a general loss, but $P$ only has the interpretation of some reference measure that helps us measure the complexity of our hypotheses via $Q \mapsto \lambda D(Q, P)$ \citep{guedj2019primer, alquier2021user}. 
Classic PAC-Bayesian bounds set $D$ to be $\KL$, but there has  been a recent push for different complexity measures \citep[][]{PACfDiv, PACRenyiDiv, haddouche2023wasserstein}.

\subsection{Generalised variational inference (GVI) in finite and infinite dimensions}

In line with the terminology coined in \citet{knoblauch2019generalized}, we refer to any algorithm aimed at solving \eqref{eq:regularised_problem} as a \textbf{generalised variational inference (GVI)} method.
Broadly speaking, there are two ways one could design such algorithms: in finite or infinite dimensions.
\textbf{Finite-dimensional GVI:} This is the original approach advocated for in \citet{knoblauch2019generalized}: instead of trying to compute $Q^*$, approximate it by  solving $Q_{\nu^*} = \argmin_{Q_{\nu} \in \mathcal{Q}}L(Q)$ for a set of measures $\mathcal{Q} := \{Q_{\nu}:\nu \in \Gamma \} \subset \mathcal{P}(\mathbb{R}^J)$ parameterised by a parameter $\nu \in \Gamma \subseteq \mathbb{R}^I$. To find $\mathcal{Q}_{\nu}$, one now simply performs (finite-dimensional) gradient descent with respect to the function $\nu \mapsto L(Q_{\nu})$.
For the special case where $P$ is a Bayesian prior, $\lambda = 1$, $\ell(\theta)$ is a negative log likelihood parametrised by $\theta$, and $D = \KL$, this recovers the well-known standard VI algorithm.
To the best of our knowledge, all methods that refer to themselves as VI or GVI in the context of deep learning are based on this approach \citep[see e.g.][]{graves2011practical,blundell2015weight,louizos2017multiplicative, wild2022generalized}. 
Since procedures of this type solve a finite-dimensional version of \eqref{eq:regularised_problem}, we refer to them as \textbf{finite-dimensional GVI (FD-GVI)} methods throughout the paper. 
While such algorithms can perform well, they have some obvious theoretical problems: 
First of all, the finite-dimensional approach typically forces us to choose $\calQ$ and $P$ to be simple distributions such as Gaussians to ensure that ${L}(Q_{\nu})$ is a tractable function of $\nu$. This often results in a $\calQ$ that is unlikely to contain a good approximation to $Q^*$; raising doubt if $Q_{\nu^*}$ can approximate $Q^*$ in any meaningful sense.
Secondly, even if $Q \mapsto L(Q)$ is strictly convex on $\mathcal{P}(\mathbb{R}^J)$, the  parameterised objective $\nu \in \Gamma \mapsto {L}(Q_{\nu})$ is usually not. Hence, there is no guarantee that gradient descent leads us to $Q_{\nu^*}$. 
{This point also applies to very expressive variational families \citep{rezende2015variational,mescheder2017adversarial} which may be sufficiently rich that $Q^* \in \calQ$, but whose optimisation problem $\nu \in \Gamma \mapsto L(Q_\nu)$ is typically non-convex and hard to solve, so that no guarantee for finding $Q^*$ can be provided.
While this does not necessarily make FD-GVI impractical, it does make it exceedingly difficult to provide a rigorous theoretical analysis outside of narrowly defined settings.
} 

{ \textbf{FD-GVI in function space:} A collection of approaches  formulated as infinite-dimensional problems are GVI methods on an infinite-dimensional function space \citep{ma2019variational,sun2018functional,ma2021functional,rodriguez2022function,wild2022generalized}. Here, the loss is often convex in function space.} {In practice however, the variational stochastic process still requires parameterization to be computationally feasible---and in this sense, function space methods are FD-GVI approaches. The resulting objectives require a good approximation of the functional KL-divergence (which is often challenging), and lead to a typically highly non-convex variational optimization problem in the parameterised space. 
}

\textbf{Infinite-dimensional GVI:}
Instead of minimising the (non-convex) problem $\nu \mapsto L(Q_{\nu})$, we want to exploit the convex structure of $Q \mapsto L(Q)$.
Of course, a priori it is not even clear how to compute the gradient for a function $Q \mapsto L(Q)$  defined on an infinite-dimensional nonlinear space such as $\mathcal{P}(\bbR^J)$.
However, in the next part of this paper we will discuss that it is possible to implement a gradient descent in infinite dimensions by using \textbf{gradient flows} on a metric space of probability measures \citep{ambrosio2005gradient}.
More specifically, one can solve the optimisation problem \eqref{eq:regularised_problem} by following the curve of steepest descent in the 2-Wasserstein space.
As it turns out, this approach is not only theoretically sound, but also conceptually elegant: it unifies existing algorithms for uncertainty quantification in deep learning, and even allows us to derive new ones.
We refer to algorithms based on some form of infinite-dimensional gradient descent as \textbf{infinite-dimensional GVI (ID-GVI)}.  
Infinite-dimensional gradient descent methods have recently gained attention in the machine learning community. 
For existing methods of this kind, the goal is to generate samples from a target $\widehat{Q}$ that has a \textit{known} form (such as the Bayes posterior) by applying a gradient flow to $Q \in \mathcal{P}(\bbR^J)  \mapsto E(Q, \widehat{Q}) $ where $E(\cdot, \cdot)$ is a discrepancy measure. 
Some methods apply the Wasserstein gradient flow (WGF) for different choices of $E$ \citep{arbel2019maximum,korba2021kernel,glaser2021kale}, whilst other methods like Stein variational gradient descent (SVGD) \citep{liu2016stein} stay within the Bayesian paradigm ($E = \text{KL}$, $\widehat{Q}$ = Bayes posterior) but use a  gradient flow other than the WGF \citep{liu2017stein}. 
{ \citet{d2021repulsive} exploit the WGF in the standard Bayesian context and combine it with different gradient estimators \citep{li2017gradient,shi2018spectral} to obtain repulsive deep ensembling schemes. Note that this is different from the repulsive approach we introduce later in this paper: Our repulsion term is the consequence of a regulariser, not a gradient estimator.
}
Since our focus is on tackling the problems associated with non-convex optimisation in Euclidean space, the approach we propose is inherently different from all of these existing methods: our target $Q^*$ is only implicitly defined via \eqref{eq:regularised_problem}, and not known explicitly.
\section{Gradient flows in finite and infinite dimensions}
\label{sec:gradient-flows}

Before we can realise our ambition to solve \eqref{eq:regularised_problem} with an ID-GVI scheme, we need to cover the relevant bases.
To this end, we will discuss gradient flows in finite and infinite dimensions, and explain how they can be used to construct infinite-dimensional gradient descent schemes. 
In essence, a gradient flow is the limit of a gradient descent whose step size goes to zero.
While the current section introduces this idea for the finite-dimensional case for ease of exposition, its use in constructing algorithms within the current paper will be for the infinite-dimensional case.

Gradient descent finds local minima of losses $\ell:\bbR^J \to \bbR$ by iteratively improving an initial guess $\theta_0 \in \bbR^J$ through the update 
$
 \theta_{k+1} := \theta_k - \eta  \nabla \ell  (\theta_k), \, k \in \bbN,   
$
where $\eta > 0$ is a step-size and $\nabla \ell$ denotes the gradient of $\ell$. For sufficiently small $\eta>0$, this update can equivalently be written as
\begin{align}
    \theta_{k+1} = \argmin_{\theta \in \bbR^J} \Big\{ \ell(\theta) + \frac{1}{2 \eta} \| \theta - \theta_k \|_2^2  \Big\}. \label{eq:minimising_movement}
\end{align}
Gradient flows formalise the following logic: 
for any fixed $\eta$, we can continuously interpolate the corresponding gradient descent iterates $\big\{ \theta_k \big\}_{k\in\bbN_0}$.
To do this, we simply define a function $\theta^\eta: [0, \infty) \to \bbR$ as $\theta^\eta(t):= \theta_{  t \slash \eta }$ for $t \in \eta \bbN_0 := \{0,\eta, 2 \eta, ...\}$. For $t \notin \eta \bbN_0$ we linearly interpolate\footnote{This means $\theta^{\eta}(t) := \frac{\theta_{s+1} - \theta_s}{\eta} \big( t - s \eta \big) + \theta_{s},$ for $  t \in [ s \eta, (s+1) \eta)$ and $s \in \bbN_0$.}. As $\eta \to 0$, the function $\theta^\eta$ converges to a differentiable function ${\theta}_*:[0,\infty) \to \bbR$ called the {gradient flow} of $\ell$, because it is characterised as solution to the ordinary differential equation (ODE)
    $\theta_*'(t) = -  \nabla \ell  ({\theta_*}(t))$
with initial condition $\theta_*(0)=\theta_0$. 
Intuitively, ${\theta_*}(t)$ 
is a continuous-time version of  discrete-time gradient descent; and
navigates through the loss landscape so that at time $t$, an infinitesimally small step in the direction of steepest descent is taken.
Put differently: gradient descent is nothing but an Euler discretisation of the gradient flow ODE \citep[see also][]{santambrogio2017euclidean}.
The result is that for mathematical convenience, one often analyses discrete-time gradient descent as though it were a continuous gradient flow---with the hope that for sufficiently small $\eta$, the behaviour of both will essentially be the same.

Our results in the infinite-dimensional case follow this principle: we propose an algorithm based on discretisation, but use continuous  gradient flows to guide the analysis.
To this end, the next section generalises  gradient flows to the nonlinear infinite-dimensional setting.

\subsection{Gradient flows in  Wasserstein spaces}  

Let $\mathcal{P}_2(\bbR^J)$ be the space of probability measures with finite second moment equipped with the 2-Wasserstein metric {given as 
\begin{equation*}
    W_2(P,Q)^2 = \inf \left\{ \int ||\theta - \theta'||^2_2 \, d \pi(\theta, \theta') : \pi \in \mathcal{C}(P,Q) \right\}
\end{equation*}
where $\mathcal{C}(P,Q) \subset \mathcal{P}(\bbR^J \times \bbR^J)$ denotes the set of all probability measure on $\bbR^J \times \bbR^J$ such that $\pi(A \times \bbR^J) = P(A)$ and $\pi(\bbR^J \times B) = Q(B)$ for all $A, B \subset \bbR^J$ \citep[see also Chapter 6 of][]{villani2009optimal}. 
}
Further, let $L: \mathcal{P}_2(\bbR^J) \to (-\infty,\infty]$ be some functional---for example $L$ in \eqref{eq:regularised_problem}.
In direct analogy to \eqref{eq:minimising_movement}, we can improve upon an initial guess 
$Q_0 \in \mathcal{P}_2(\bbR^J)$ by iteratively solving %
\begin{align*}
    Q_{k+1} := \argmin_{Q \in \mathcal{P}_2(\bbR^J)} \big\{ L(Q) + \frac{1}{2 \eta} W_2(Q, Q_k)^2 \big\}
\end{align*}
for $k \in \bbN_0$ and small $\eta >0$ \citep[see Chapter 2 of][for details]{ambrosio2005gradient}. 
Again, for $\eta \to 0$, an appropriate  limit yields a continuously indexed family of measures $ \{ Q(t) \}_{t \ge 0}$. 
If $L$ is sufficiently smooth and $Q_0 = Q(0)$ has Lebesgue density $q_0$, the time evolution
for the corresponding pdfs $\{ q(t) \}_{t \ge 0}$ is given by the partial differential equation (PDE) 
\begin{align}
    \partial_t q(t,\theta)  =  \nabla \cdot \Big( q(t,\theta) \, \nabla_W L\big[Q(t)\big](\theta) \Big), \label{eq:GF_PDE}
\end{align}
with $q(0,\cdot) = q_0$ \citep[Section 9.1]{villani2021topics}. Here $\nabla \cdot f := \sum_{j=1}^J \partial_j f_j$ denotes the divergence operator and $\nabla_W L[Q]: \bbR^J \to \bbR^J$ the Wasserstein gradient (WG) of $L$ at $Q$. 
For the purpose of this paper, it is sufficient to think of the WG as a gradient of the first variation; i.e.
  $  \nabla_W L[Q] = \nabla L'[Q] $
where $L'[Q]: \bbR^J \to \bbR^J$ is the first variation of $L$ at $Q$ \citep[Exercise 15.10]{villani2009optimal}. The Wasserstein gradient flow (WGF) for $L$ is then the solution $q^{\dagger}$ to the PDE \eqref{eq:GF_PDE}. If $L$ is chosen as in \eqref{eq:regularised_problem}, our hope is that the logic of finite-dimensional gradient descent carries over; and that $\lim_{t \to \infty} q^{\dagger}(t,\cdot)$ is in fact the density $q^*$ corresponding to $Q^*$.

Following this reasoning, this paper applies the WGF for \eqref{eq:regularised_problem} to obtain an ID-GVI algorithm.
In Section \ref{sec:optimisation_pm} and Appendices C--F, 
we formally show that the WGF indeed yields $Q^*$ (in the limit as $t\to\infty$) for a number of regularisers of practical interest.

\subsection{Realising the Wasserstein gradient flow}
\label{sec:realising-WGF}

In theory, the PDE in \eqref{eq:GF_PDE} { could be solved numerically in order to implement the } infinite-dimensional gradient descent  for \eqref{eq:regularised_problem}.
In practice however, this is impossible: numerical solutions to PDEs become computationally infeasible for the high-dimensional parameter spaces which are common in deep learning applications. 
Rather than trying to first approximate the $q$ solving \eqref{eq:GF_PDE} and then sampling from its limit in a second step, we will instead formulate equations which replicate how the samples from the solution to \eqref{eq:GF_PDE} evolve in time.
This leads to tractable inference algorithms that can be implemented in high dimensions.

Given the goal of producing samples directly, we focus on a particular form of loss that is well-studied in the context of thermodynamics \citep[Chapter 7]{santambrogio2015optimal}, and which recovers various forms of the GVI problem in \eqref{eq:regularised_problem} (see Section \ref{sec:optimisation_pm}). 
In thermodynamics, $Q \in \mathcal{P}_2(\bbR^J)$ describes the distribution of particles located at specific points in $\bbR^J$.
The overall {energy} of a collection of particles sampled from $Q$ is decomposed into three parts: (i) the {external potential} $V(\theta)$ which acts on each particle individually, (ii) the {interaction energy} $\kappa(\theta,\theta')$ describing pairwise interactions between particles,  
and (iii) an overall {entropy} of the system measuring how concentrated the distribution $Q$ is. 
Taking these components together, we obtain the so called \textbf{free energy}
\begin{align}
    L^{\text{fe}}(Q) := \int V(\theta) \, d Q(\theta) + \frac{\lambda_1}{2} \iint \kappa(\theta,\theta') \, dQ(\theta) dQ(\theta') + \lambda_2 \int \log q(\theta) q(\theta) \, d\theta, \label{eq:free_energy}
\end{align}
for $Q \in \mathcal{P}_2(\bbR^J)$ {with Lebesgue density  $q$}, $\lambda_1\ge0$, $\lambda_2 \ge0$. Note that for $\lambda_2>0$ we implicitly assume that $Q$ has a density.
Following Section 9.1 in \citet[][]{villani2009optimal}, its WG is
\begin{align*}
    \nabla_W L^{\text{fe}}[Q](\theta) = \nabla V (\theta) + \lambda_1 \int (\nabla_1 \kappa) (\theta, \theta') \, dQ(\theta') + \lambda_2 \nabla \log q(\theta),
\end{align*}
where $\theta \in \bbR^J$,  and $\nabla_1 \kappa$ denotes the gradient of $\kappa$ with respect to the first variable.
We plug this into \eqref{eq:GF_PDE} to obtain the desired density evolution.
Importantly, this time evolution has the  exact form of a nonlinear Fokker-Planck equation associated with a stochastic process of McKean-Vlasov type (see Appendix \ref{ap:realise_WGF} for details). 
Fortunately for us, it is well-known that
such processes can be approximated through interacting particles \citep{veretennikov2006ergodic} generated by the following procedure:
\begin{itemize}
    \myitem{\textbf{Step 1:}} Sample $N_E \in \bbN$ particles $\theta_1(0),\hdots,\theta_{N_E}(0)$ independently from  $Q_0 \in \mathcal{P}_2(\bbR^J)$. 
    \myitem{\textbf{Step 2:}} Evolve the particle $\theta_n$ by following the stochastic differential equation (SDE) 
    \begin{align}
    d \theta_n(t) = - \Big( \nabla V \big(\theta_n(t) \big) + \frac{\lambda_1}{N_E} \sum_{j=1}^{N_E} (\nabla_1 \kappa) \big( \theta_n(t), \theta_j(t) \big) \Big) dt + \sqrt{2\lambda_2}  d B_n(t), \label{eq:particle_system}
    \end{align}
    for $n=1,\hdots,N_E$, 
    and $\{ B_n(t) \}_{t >0 }$ stochastically independent Brownian motions.
\end{itemize}
As $N_E \to \infty$, the distribution of  $\theta_1(t),\hdots,\theta_{N_E}(t)$ evolves in $t$ in the same way as the sequence of densities $q(t,\cdot)$ solving \eqref{eq:GF_PDE}.
This means that we can implement infinite-dimensional gradient descent by following the WGF and simulating trajectories for infinitely many interacting particles according to the above procedure.
In practice, we can only simulate finitely many trajectories over a finite time horizon. 
This produces samples $\theta_1(T),\hdots,\theta_{N_E}(T)$ for $N_E \in \mathbb{N}$ and $T >0$. 
Our intuition and  Section \ref{sec:optimisation_pm} tell us that, as desired, the distribution of $\theta_1(T),\hdots,\theta_{N_E}(T)$ will be close to the global minimiser of $L^{\text{fe}}$.

In the next section, we will use the above algorithm to construct an ID-GVI method producing samples approximately distributed according to $Q^*$ defined in \eqref{eq:regularised_problem}.
Since $N_E$ and $T$ are finite, and since we need to discretise \eqref{eq:particle_method}, there will be an approximation error.
Given this, how good are the samples produced by such methods? 
As we shall demonstrate in Section \ref{sec:experiments}, the approximation errors are small, and certainly should be expected to be much smaller than those of standard VI and other FD-GVI methods.

\section{Optimisation in the space of probability measures}\label{sec:optimisation_pm}

With the WGF on thermodynamic objectives in place, we can now finally show how it yields ID-GVI algorithms to solve \eqref{eq:regularised_problem}.
We  put particular focus on the analysis of the regulariser $D$; providing new perspectives on heuristics for uncertainty quantification in deep learning in the process.
Specifically, we establish formal links explaining how they may (not) be understood as a Bayesian procedure.
Beyond that, we derive the WGF associated with regularisation using the maximum mean discrepancy, and provide a theoretical analysis of its convergence properties.

\subsection{Unregularised probabilistic lifting: Deep ensembles}

We start the analysis with the base case of an unregularised functional $L(Q) = \int \ell(\theta) dQ(\theta)$, corresponding to $\lambda=0$ in \eqref{eq:regularised_problem}. 
This is also a special case of \eqref{eq:free_energy} with $\lambda_1=\lambda_2=0$. 
As $\lambda_1=0$, there is no interaction term, and all particles can be simulated  independently from one another as
\begin{align*}
 \theta_{1}(0), \hdots, \theta_{N_E}(0) \sim Q_0,\quad
& \theta_{n}'(t) =   - \nabla \ell \big(\theta_{n}(t) \big), \quad n =1,\hdots,N_E . %
\end{align*}
This simple algorithm  happens to coincide exactly with how deep ensembles (DEs) are constructed \citep[see e.g.][]{lakshminarayanan2017simple}. 
In other words: the simple heuristic of running gradient descent algorithm several times with random initialisations sampled from $Q_0$ is an approximation of the WGF for the \textit{unregularised} probabilistic lifting of the loss function $\ell$.

Following the WGF in this case does not generally produce samples from a global minimiser of $L$.
Indeed, the fact that $L$ generally does not even have a unique global minimiser was the motivation for regularisation in \eqref{eq:regularised_problem}. 
Even if $L$ had a unique minimiser however, a DE would not find it.
The result below proves this formally: unsurprisingly, deep ensembles simply sample the local minima of $\ell$ with a probability that depends on the domain of attraction and the initialisation distribution $Q_0$.

\begin{theorem}\label{thm:DE_distribution}
If $\ell$ has countably many local minima $\{m_i: i \in \bbN \}$, then it holds independently for each $n=1,\hdots,N_E$  that
$$
\theta_{n}(t) \overset{\mathcal{D}}{\longrightarrow} \sum_{i=1}^\infty Q_0(\Theta_i) \, \delta_{m_i} =: Q_{\infty}
$$
for $t \to \infty$. 
Here $ \overset{\mathcal{D}}{\longrightarrow}$ denotes convergence in distribution and $\Theta_i= \{ \theta \in \mathbb{R}^J: \lim_{t\to\infty}\theta_*(t) = m_i \text{ and } \theta_*(0) = \theta\}$ denotes the domain of attraction for $m_i$ with respect to the gradient flow $\theta_*$. 
\end{theorem}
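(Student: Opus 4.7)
My plan is to exploit the fact that each particle in the unregularised ensemble evolves by a \emph{deterministic} gradient flow ODE, with randomness only in its initial condition. Writing $\Phi_t:\mathbb{R}^J \to \mathbb{R}^J$ for the (time-$t$) flow map of the ODE $\theta_*'(s) = -\nabla\ell(\theta_*(s))$, we have $\theta_n(t) = \Phi_t(\theta_n(0))$ with $\theta_n(0) \sim Q_0$. Since the $N_E$ particles are evolved independently, it suffices to prove the one-particle statement for $n=1$.

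First I would unpack the definition of the domain of attraction: by construction, the sets $\{\Theta_i\}_{i\in\mathbb{N}}$ are pairwise disjoint, and for every $\theta \in \Theta_i$ one has $\Phi_t(\theta) \to m_i$ as $t \to \infty$. Setting $A := \bigcup_i \Theta_i$, I would observe (or assume, as the statement implicitly requires) that $Q_0(A)=1$, i.e.\ $Q_0$-almost every initialisation ends up in the basin of some local minimum; otherwise the limiting object has mass only $\sum_i Q_0(\Theta_i)\le 1$. Under this convention, the measure $Q_\infty$ in the statement is a genuine probability measure on $\mathbb{R}^J$ supported on $\{m_i\}$ with $Q_\infty(\{m_i\}) = Q_0(\Theta_i)$.

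To establish convergence in distribution, I would verify the Portmanteau criterion via bounded continuous test functions. For any $f \in C_b(\mathbb{R}^J)$,
\begin{align*}
\mathbb{E}\bigl[f(\theta_1(t))\bigr]
\;=\; \int_{\mathbb{R}^J} f\bigl(\Phi_t(\theta)\bigr)\, dQ_0(\theta)
\;=\; \sum_{i=1}^\infty \int_{\Theta_i} f\bigl(\Phi_t(\theta)\bigr)\, dQ_0(\theta).
\end{align*}
On each $\Theta_i$, continuity of $f$ and $\Phi_t(\theta)\to m_i$ give pointwise $f(\Phi_t(\theta))\to f(m_i)$. Since $\|f\|_\infty < \infty$ and $Q_0$ is a probability measure, the dominated convergence theorem (applied inside each $\Theta_i$ and then via its countable-additivity version across $i$) yields
\begin{align*}
\lim_{t\to\infty}\mathbb{E}\bigl[f(\theta_1(t))\bigr]
\;=\; \sum_{i=1}^\infty Q_0(\Theta_i)\, f(m_i)
\;=\; \int f\, dQ_\infty,
\end{align*}
which is exactly weak convergence $\theta_1(t) \xrightarrow{\mathcal{D}} Q_\infty$.

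The main obstacle is not the limit-passage, which is essentially bookkeeping, but rather making the measure-theoretic setup honest. Two subtleties warrant care: (i) one must justify that each $\Theta_i$ is measurable (it is an $F_\sigma$ set as a countable intersection/union built from continuous maps $\Phi_t$, so this is routine) and that $\theta \mapsto \Phi_t(\theta)$ is measurable for each $t$ (follows from continuous dependence of ODE solutions on initial data, given mild regularity of $\nabla\ell$); and (ii) one must address the set $\mathbb{R}^J\setminus A$ of initialisations whose trajectories fail to converge to any local minimum (e.g.\ trajectories approaching saddles, or escaping to infinity). The cleanest resolution is to assume $Q_0(\mathbb{R}^J\setminus A)=0$, which is standard under genericity, or else to interpret the conclusion as convergence to the sub-probability measure $Q_\infty$ in the vague topology. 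I would make this assumption explicit in the proof.
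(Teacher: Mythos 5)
Your limit-passage is essentially the paper's argument: the paper also reduces the theorem to the observation that, once $Q_0\big(\bigcup_i \Theta_i\big)=1$, the deterministic flow converges $Q_0$-almost surely to $\sum_i m_i \mathbbm{1}\{\theta_0\in\Theta_i\}$, and almost sure convergence implies convergence in distribution; your test-function/dominated-convergence write-up of that step is just a more explicit version of the same thing, and indeed the appendix statement of the theorem carries $\sum_{i=1}^\infty Q_0(\Theta_i)=1$ as an explicit hypothesis, exactly as you propose.

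The difference---and the one place where your proposal falls short of the paper---is that you dispose of this hypothesis with ``standard under genericity,'' whereas the paper's proof spends most of its effort establishing it. Its Lemma \ref{lemma:convegence_loc_min} shows that for Lebesgue-almost every initialisation the gradient flow converges to a local minimum, using three ingredients you do not supply: norm-coercivity of $\ell$ to rule out trajectories escaping to infinity; the Lojasiewicz inequality (via the Lojasiewicz theorem) to upgrade ``the bounded trajectory has an accumulation point'' to ``the trajectory converges to a single critical point''---for merely smooth, non-analytic losses a bounded gradient-flow trajectory can fail to converge, its $\omega$-limit set being a continuum of critical points, so this is not a formality; and a continuous-time stable-manifold argument (following \citet{lee2016gradient}) to exclude convergence to saddle points off a Lebesgue-null set. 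With that lemma in hand, the condition $\sum_i Q_0(\Theta_i)=1$ holds for every $Q_0$ with a Lebesgue density (this is the paper's remark after the theorem), which is what makes the result a statement about deep ensembles rather than a conditional statement. So your proof is correct relative to the assumption you flag, but to prove the theorem as the paper intends it you would still need to supply the coercivity--Lojasiewicz--saddle-avoidance argument, or at least cite it, rather than assume its conclusion.
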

A proof with technical assumptions{---most importantly a version of the famous Lojasiewicz inequality---}is in Appendix \ref{ap:asym_distr_unreg}.
Theorem \ref{thm:DE_distribution}  derives the limiting distribution for $\theta_1(T),\hdots,\theta_{N_E}(T)$, which shows that---unless all local minima are global minima---the WGF does not generate samples from a global minimum of $Q \mapsto L(Q)$ for the unregularised case $\lambda = 0$. 
{
Note that the conditions of this result simplify the situation encountered in deep learning, where the set of minimisers would typically be uncountable \citep{liu2022loss}.
While one could derive a very similar result for the case of uncountable minimisers, this becomes notationally cumbersome and would obscure the main point of the Theorem---that $Q_\infty$ strongly depends on the initialisation $Q_0$.
Importantly, the dependence of $Q_\infty$ on  $Q_0$ remains true for all losses constructed via deep learning architectures.
}
However, despite these theoretical shortcomings, DEs remain highly competitive in practice and typically beat FD-GVI methods like standard VI \citep{ovadia2019can,fort2019deep}. 
This is perhaps not surprising: DEs implement an infinite-dimensional gradient descent, while FD-GVI methods are parametrically constrained.
Perhaps more surprisingly, we observe in Section \ref{sec:experiments} that DEs can even easily compete with the more theoretically sound and regularised ID-GVI methods that will be discussed in Section \ref{sec:reg_KL} and \ref{sec:reg_MMD}. 
We study this phenomenon in Section \ref{sec:experiments}, and find that it is a consequence of the fact that in deep learning, $N_E$ is small compared to the number of local minima (cf. Figure \ref{fig:countless_modes}).

\subsection{Regularisation with the Kullback-Leibler divergence: Deep Langevin ensembles}\label{sec:reg_KL}

In Section \ref{sec:prob_lifting},
we argued for regularisation by $D$ to  ensure a unique minimiser $Q^*$. 
The Kullback-Leibler  divergence ($D=\KL$) is the canonical choice for  (generalised) Bayesian and PAC-Bayesian methods \citep{bissiri2016general,knoblauch2019generalized, guedj2019primer, alquier2021user}.
Now, $Q^*$ has a known form: if $P$ has a pdf $p$, it has an associated density given by 
$
    q^*(\theta) \propto \ \exp ( - \frac{1}{\lambda} \ell(\theta) ) p(\theta)
$ \citep[Theorem 1]{knoblauch2019generalized}.

Notice that the $\KL$-regularised version of $L$ in \eqref{eq:regularised_problem} can be rewritten in terms of the objective $L^{\text{fe}}$ in \eqref{eq:free_energy} by setting 
$V(\theta) = \ell(\theta) - \lambda \log p(\theta) $, $\lambda_1=0$ and $\lambda_2 = \lambda$. 
Compared to the unregularised objective  of the previous section (where $\lambda = 0$), the external potential is now adjusted by $- \lambda \log p(\theta)$, forcing $Q^*$ to allocate more mass in regions where $p$ has high density.
Beyond that, the presence of the negative entropy term has three effects: it ensures that the objective is strictly convex, that $Q^*$ is more spread out, and that it has a density $q^*$.
Since $\lambda_1=0$, the corresponding particle method still does not have an  interaction and is given as
\begin{align}
    &\theta_1(0),\hdots,\theta_{N_E}(0) \sim Q_0 \, \quad d \theta_n(t) = - \nabla V \big(\theta_n(t) \big) dt + \sqrt{2 \lambda}  d B_n(t), \quad n=1,\hdots,N_E.
    \label{eq:DLE}
\end{align}
Clearly, this is just the Langevin SDE and we call this approach the \textbf{deep Langevin ensemble (DLE)}.
While the name may suggest that DLE is equivalent to the unadjusted Langevin algorithm (ULA) \citep{roberts1996exponential}, this is not so: for $T$ discretisation steps $t_1, t_2, \dots t_T$, DLE approximates measures using the  \textit{end-points} of $N_E$ trajectories given by $\{ \theta_n(t_T) \}_{n=1}^{N_E}$. 
In contrast, ULA would use a (sub)set of the samples $\{ \theta_1(t_i) \}_{i=1}^T$ generated from one single particle's \textit{trajectory}. 
To analyse DLEs, we build on the Langevin dynamics literature: in Appendix \ref{ap:asymptotic_KL}, we show that $\theta_n(t) \overset{\mathcal{D}}{\longrightarrow} Q^*$ as $t \to \infty$, independently for each $n=1,\hdots,N_E$.
Hence $\theta_1(T), \hdots, \theta_{N_E}(T)$ will for large $T > 0$ be approximately distributed according to $Q^*$.
Comparing DE and DLE in this light, we note several important key differences: $Q^*$ as defined per \eqref{eq:regularised_problem} is unique, has the form of a Gibbs measure, and can be sampled from using \eqref{eq:DLE}.
In contrast, unregularised DE produces samples from $Q_{\infty}$ in Theorem \ref{thm:DE_distribution} which is not the global minimiser. Specifically neither $Q_\infty$ nor $Q^*$ for DEs correspond to the Bayes posterior. It is therefore not a Bayesian procedure in any commonly accepted sense of the word.

\subsection{Regularisation with maximum mean discrepancy: Deep repulsive Langevin ensembles}\label{sec:reg_MMD}

Regularising with $\KL$ is attractive because $Q^*$ has a known form.
However, in our theory, there is no reason to restrict attention to a single type of regulariser: we  introduced $D$ to convexify our objective. 
It is therefore of theoretical and practical interest to see which algorithmic effects are induced by other regularisers.
We illustrate this by first considering regularisation using the squared maximum-mean discrepancy (MMD) \citep[see e.g.][]{gretton2012} only, and then a combination of MMD and KL.

For a kernel $\kappa: \bbR^J \times \bbR^J \to \bbR$, the squared MMD between  measures $Q$ and $P$ is 
\begin{align*}
\MMD(Q, P)^2 &= \iint \kappa(\theta, \theta') dQ(\theta) dQ(\theta') - 2 \iint \kappa(\theta, \theta') dQ(\theta) dP(\theta') \\
    &+ \iint \kappa(\theta, \theta') dP(\theta) dP(\theta') .
\end{align*}
$\MMD$ measures the difference between within-sample similarity and across-sample similarity, so it is smaller when samples from $P$ are similar to samples from $Q$, but also larger when samples within $Q$ are similar to each other.
This means that regularising \eqref{eq:regularised_problem} with $D = \MMD^2$ introduces interactions characterised precisely by the kernel $\kappa$, and we can show this explicitly by rewriting $L$ of \eqref{eq:regularised_problem} into the form of $L^{\text{fe}}$ in \eqref{eq:free_energy}.
{
 In other words, inclusion of $\MMD^2$ makes particles repel each other, making it more likely that they fall into different (rather than the same) local minima.
}
Writing the kernel mean embedding as  $\mu_P(\theta):= \int \kappa(\theta, \theta') \, dP(\theta')$, we see that up to a constant not depending on $Q$, $L(Q) = L^{\text{fe}}(Q)$ for $V(\theta) = \ell(\theta) - \lambda_1\mu_P(\theta)$, $\lambda = \frac{\lambda_1}{2}$, and $\lambda_2 = 0$.    
While we can show that a global minimiser $Q^*$ exists, and while we could produce particles using the algorithm of Section \ref{sec:realising-WGF}, we cannot guarantee that they are distributed according to $Q^{*}$ (see Appendix \ref{ap:asy_mmd}).
Essentially, this is because in certain situations, we cannot guarantee that $Q^*$ has a density for $D = \MMD^2$.

To remedy this problem, we additionally regularise with the $\KL$: since $\KL(Q,P)=\infty$ if $P$ has a Lebesgue density but $Q$ has not, this now guarantees that $Q^*$ has a density $q^*$.
In terms of \eqref{eq:regularised_problem}, this means that $D = \lambda\MMD^2 + \lambda'\KL$.
Adding regularisers like this has a long tradition, and is usually done to combine the different strengths of various regularisers \citep[see e.g.][]{zou2005regularization}.
Here, we follow this logic: the $\KL$ ensures that $Q^*$ has a density, and the $\MMD$ makes particles repel each other.
With this, we can rewrite $L(Q)$ in terms of $L^{\text{fe}}(Q)$ up to a constant not depending on $Q$ by taking
$\lambda = \frac{\lambda_1}{2}$, $\lambda' = \lambda_2$, and $V(\theta) = \ell(\theta) - \lambda_1\mu_{P}(\theta) - \lambda_2\log p(\theta)$.
Using the same algorithmic blueprint as before, we evolve particles according to \eqref{eq:particle_system}.
As these particles follow an augmented Langevin SDE that incorporates repulsive particle interactions via $\kappa$, we call this method the \textbf{deep repulsive Langevin ensemble (DRLE)}.
We show in Theorem \eqref{thm:DRLE} (cf. Appendix E for details and assumptions) that DRLEs generate  samples from the global minimiser $Q^*$ in the infinite particle and infinite time horizon limit.
\begin{theorem}\label{thm:DRLE}
Let $Q^{n,N_E}(T)$ be the distribution of $\theta_n(T)$, $n=1,\hdots,N_E$,  generated via \eqref{eq:particle_system}. Then 
$
    Q^{n,N_E}(T) \overset{\mathcal{D}}{\longrightarrow} Q^*
$
for each $n=1,\hdots,N_E$ and as  $N_E \to \infty$, $T \to \infty$.
\end{theorem}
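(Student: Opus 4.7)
The plan is to decompose the joint limit $(N_E, T) \to (\infty, \infty)$ into two conceptually separate steps: a mean-field limit $N_E \to \infty$ at fixed $T$ (propagation of chaos), followed by a long-time limit $T \to \infty$ along the Wasserstein gradient flow of $L^{\text{fe}}$. To obtain the joint statement, one either argues along a diagonal sequence $T = T(N_E) \to \infty$ chosen slowly enough that the mean-field error does not overwhelm the relaxation of the flow, or invokes a uniform-in-time propagation of chaos estimate.

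For the mean-field step, I would introduce an auxiliary system of $N_E$ independent copies $\{\tilde{\theta}_n(t)\}_{n=1}^{N_E}$ of the McKean--Vlasov SDE associated to \eqref{eq:GF_PDE} for $L = L^{\text{fe}}$, driven by the \emph{same} Brownian motions $B_n$ as those in \eqref{eq:particle_system}. Under Lipschitz assumptions on $\nabla V$ and $\nabla_1 \kappa$, a synchronous coupling together with Gr\"onwall's inequality yields $\sup_{t \le T} \bbE[\|\theta_n(t) - \tilde{\theta}_n(t)\|_2^2] = O(1/N_E)$. This is a classical Sznitman-style argument and in particular gives $W_2(Q^{n,N_E}(T), \tilde{Q}(T)) \to 0$ as $N_E \to \infty$, where $\tilde{Q}(t)$ denotes the common law of the $\tilde{\theta}_n(t)$.

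For the long-time step, by construction $\tilde{Q}(t)$ solves the nonlinear Fokker--Planck PDE \eqref{eq:GF_PDE} with $L=L^{\text{fe}}$, i.e.\ it \emph{is} the WGF of $L^{\text{fe}}$ initialised at $Q_0$. The presence of the entropy term ($\lambda_2 > 0$), together with a positive definite kernel making the MMD contribution geodesically convex, provides strict convexity of $L^{\text{fe}}$ along Wasserstein geodesics up to a perturbation coming from $\ell$. The dissipation identity
\begin{equation*}
\frac{d}{dt} L^{\text{fe}}(\tilde{Q}(t)) = -\int \|\nabla_W L^{\text{fe}}[\tilde{Q}(t)](\theta)\|_2^2 \, d\tilde{Q}(t)(\theta),
\end{equation*}
combined with a log-Sobolev (or Polyak--{\L}ojasiewicz) type inequality on $\mathcal{P}_2(\bbR^J)$, yields exponential decay of $L^{\text{fe}}(\tilde{Q}(t)) - L^{\text{fe}}(Q^*)$. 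Talagrand's $T_2$ inequality then upgrades energy convergence into $W_2(\tilde{Q}(t), Q^*) \to 0$ as $t \to \infty$, and stitching this together with the mean-field estimate yields the theorem.

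The main obstacle is the long-time convergence in the absence of convexity of $\ell$: standard Bakry--\'Emery arguments require a uniform lower bound on the Hessian of the driving potential $V$, which $\ell$ does not satisfy. One must therefore either exploit that the entropy and repulsive MMD contributions jointly dominate the non-convex part of $V$, or impose regularity conditions (e.g.\ a lower bound on $\nabla^2 V$, or convexity at infinity) under which a log-Sobolev inequality still holds along the flow. A secondary, more routine difficulty is handling drifts that are only locally Lipschitz in the propagation of chaos step, which typically requires a priori moment bounds on the particle system to localise the Gr\"onwall argument.
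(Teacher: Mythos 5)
Your two-step plan (Sznitman-style propagation of chaos at fixed $T$, then long-time convergence of the WGF, glued by a diagonal or uniform-in-time argument) is a reasonable and genuinely different route from the paper's, but as written it has a real gap precisely at the step you flag and do not resolve: the long-time limit. The machinery you invoke---energy dissipation plus a log-Sobolev/Polyak--{\L}ojasiewicz inequality and Talagrand's $T_2$---requires convexity-type structure (a Hessian lower bound on $V$, or geodesic convexity of the potential energy, which for $\int \ell\, dQ$ holds only if $\ell$ itself is convex) that the theorem does not assume and that the paper explicitly rules out as unrealistic for deep-learning losses. Saying that the entropy and MMD terms ``jointly dominate the non-convex part of $V$'' is not something one can establish in general, so the exponential decay of $L^{\text{fe}}(\tilde{Q}(t)) - L^{\text{fe}}(Q^*)$, and hence your whole second step, remains unproven. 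The same issue resurfaces in the gluing step: uniform-in-time propagation of chaos again needs structural (confinement/convexity) hypotheses, and a diagonal sequence $T(N_E)$ needs some quantitative control of the relaxation that you have not secured.

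The paper avoids rates and functional inequalities altogether. It works qualitatively: (i) it shows that a local (hence the global) minimiser of $L$ is a stationary point of the Wasserstein gradient (Lemma \ref{Lemma:local_min}); (ii) it proves that, for measures possessing a Lebesgue density, stationary points of the Wasserstein gradient coincide exactly with invariant measures of the McKean--Vlasov process whose generator family matches the WGF of $L^{\text{fe}}$ (Lemma \ref{Lemma:equivalence2}); (iii) the KL term forces $Q^*$ to have a density, so $Q^*$ is such an invariant measure; and (iv) it then cites the ergodicity theory for nonlinear Markov processes \citep[Theorem 2]{veretennikov2006ergodic}, which under conditions on $V$ and $\kappa$ yields directly the joint convergence $Q^{n,N_E}(T) \overset{\mathcal{D}}{\longrightarrow} Q_\infty$ as $N_E, T \to \infty$ to the \emph{unique} invariant measure, whence $Q_\infty = Q^*$. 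Note the different division of labour: in the paper the identification of the limit with $Q^*$ is the novel content (via the invariance/stationarity equivalence), while the convergence itself is outsourced to the ergodicity literature; in your proposal the identification would come for free from energy decay, but the convergence is exactly the part you cannot establish without assumptions stronger than those the theorem makes. If you want to salvage your route, you would have to restrict to strongly convex $V$ (defeating the purpose) or import a uniform-in-time propagation-of-chaos/ergodicity result of the Veretennikov type---at which point you have essentially rejoined the paper's argument.
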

This is remarkable: we have constructed an algorithm that generates samples from the global minimiser $Q^*$---even though  a formal expression for what exactly $Q^*$ looks like is unknown!
This demonstrates  how impressively powerful the WGF is as tool to derive inference algorithms.
Note that this is completely different from sampling methods employed for Bayesian methods, for which the form of $Q^*$ is typically known explicitly up to a proportionality constant.

{
A notable shortcoming of Theorem \ref{thm:DRLE} is its asymptotic nature. A more refined analysis could quantify how fast the convergence happens in terms of $N_E$, $T$, the SDE's discretisation error, and maybe
even the estimation errors due to sub-sampling of losses for constructing gradients.
While the existing literature could be adapted to derive the speed of convergence for DRLE in $T$ \citep[Section 11.2]{ambrosio2005gradient}, this would require a strong convexity assumption on the potential $V$, which will not be satisfied for any applications in deep learning. 
This is perhaps unsurprising: even for the Langevin algorithm---probably the most thoroughly analysed algorithm in this literature---no convergence rates  have been derived that are applicable to the highly multi-modal target measures encountered in Bayesian deep learning  \citep{wibisono2019proximal,chewi2022analysis}.
That being said, for the case of deep learning, FD-GVI approaches fail to provide even the most basic  asymptotic convergence guarantees.
Thus, the fact that it is even possible for us to provide \textit{any} asymptotic guarantees derived from realistic assumptions marks a significant improvement over the available theory for FD-GVI methods, and---by virtue of Theorem \ref{thm:DE_distribution}---over DEs as well.

\section{Experiments}\label{sec:experiments}

Since the paper's primary focus is on theory, we use two experiments to reinforce some of the predictions it makes in previous sections, and a third experiment that shows why--in direct contradiction to a naive interpretation of the presented theory--it is typically difficult to beat simple DEs. 
More details about the conducted experiments can be found in Appendix \ref{ap:implementation_details}. The code is available on \url{https://github.com/sghalebikesabi/GVI-WGF}.

\textbf{Global minimisers:} Figure \ref{fig:optq} illustrates the theory of Sections  \ref{sec:optimisation_pm} and Appendices C--F: DLE and DRLE produce samples from their respective global minimisers, while DE produces a distribution which----in accordance with Theorem \ref{thm:DE_distribution}---does not correspond to the global minimiser of $Q \mapsto \int \ell(\theta) \, dQ(\theta)$ over $\mathcal{P}(\mathbb{R}^J)$ (which is given as Dirac measure located at $\theta=-2$).

\begin{figure}
    \centering
    \includegraphics[width=\textwidth]{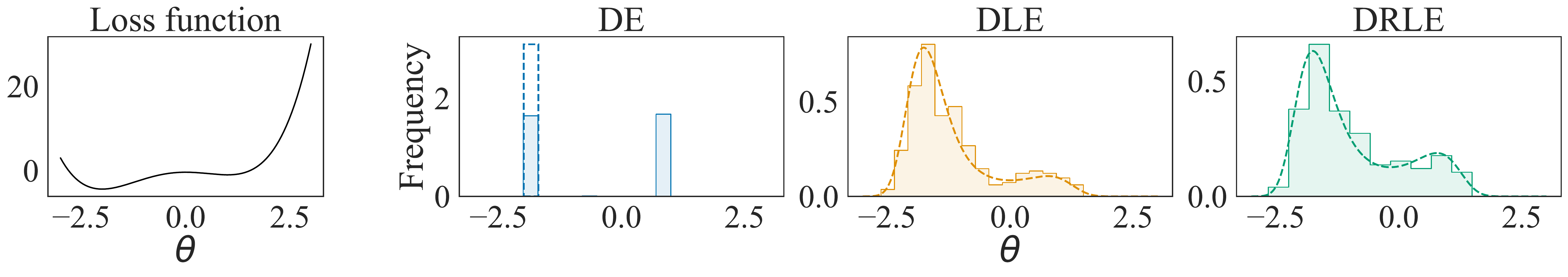}
    \caption{ We generate $N_E=300$ particles from DE, DLE and DRLE. The theoretically optimal global minimisers $Q^*$ are depicted with dashed line strokes, and the generated samples are displayed via histograms. We use $P=Q_0 = \mathcal{N}(0,1)$ for DLE and DRLE. Notice that the optimal $Q^*$ differs slightly between DLE and DRLE.
    }
    \vspace{-0.5cm}
    \label{fig:optq}
\end{figure}

\textbf{FD-GVI vs ID-GVI:} Figure \ref{fig:multimodal} illustrates two aspects. First, the effect of regularisation for
DLE and DRLE is that particles spread out around the local minima.
In comparison, DE particles fall directly into the local minima. 
Second, FD-GVI (with Gaussian parametric family) leads to qualitatively poorer approximations of $Q^*$. 
This is because the ID-GVI  methods  explore the whole space $\mathcal{P}_2(\mathbb{R}^J)$, whilst FD-GVI is limited to learning a unimodal Gaussian.
\begin{figure}
    \centering
    \includegraphics[width=\textwidth]{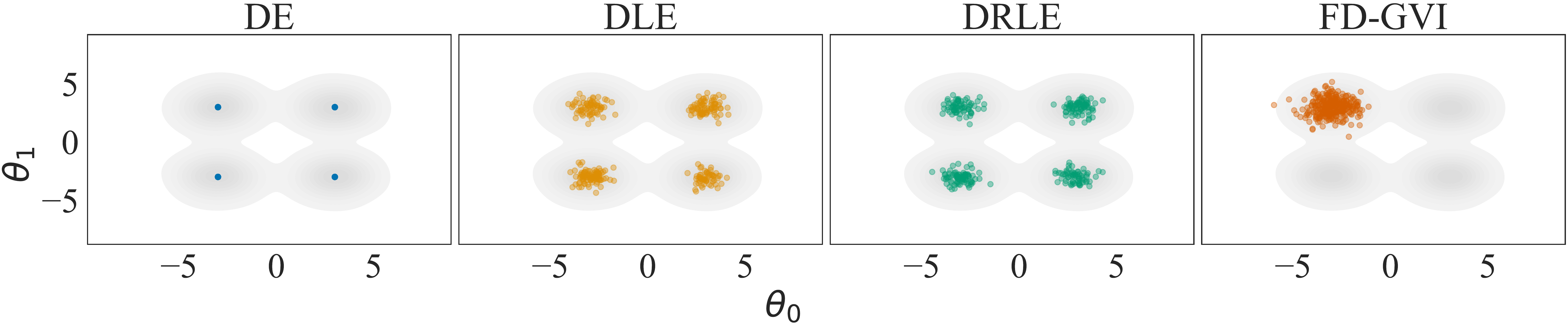}
    \caption{ We generate $N_E = 300$ particles from DE, DLE, DRLE and FD-GVI with Gaussian parametrisation. The multimodal loss $\ell$ is plotted in grey and the particles of the different methods are layered on top. The prior in this example is flat, i.e. $\log p$ and $\mu_P$ are constant. The initialisation $Q_0$ is standard Gaussian.}
    \label{fig:multimodal}
\end{figure}

\textbf{DEs vs D(R)LEs, and why finite $N_E$ matters:}  Table \ref{tab:UCI:eval_nll} compares DE, DLE and DRLE on a number of real world data sets, and finds a rather random distribution of which method performs best. 
This seems to contradict our theory, and suggests there is essentially no difference between regularised and unregularised ID-GVI. 
What explains the discrepancy?
Essentially, it is the fact that $N_E$ is not only finite, but much smaller than the number of minima found in the loss landscape of deep learning.
In this setting, each particle moves into the neighbourhood of a well-separated single local minimum and typically never escapes, even for very large $T$.
We illustrate this in Figure \ref{fig:countless_modes} with a toy example. 
We choose a uniform prior $P$ and initialisation $Q_0$ and the loss $\ell(\theta):= - | \sin(\theta) |$, $\theta \in [-1000 \pi, 1000 \pi]$, which has $2000$ local minima. Correspondingly $Q^*$ will have many local modes for all methods. Note that  $\nabla V$ is the same for all approaches since $\log p$ and $\mu_P$ are constant. The difference between the methods boils down to repulsive and noise effects.
{However, these noise effects are not significant if each particle is stuck in a single mode: the particles will bounce around their local modes, but not explore other parts of the space. This implies that they will not improve the approximation quality of $Q^*$. Note that this problem is a direct parallel to multi-modality---a well-known problem for Markov Chain Monte Carlo methods \citep[see e.g.][]{syed2022non}.}
\begin{figure}
        \begin{center} \includegraphics[width=0.55\textwidth]{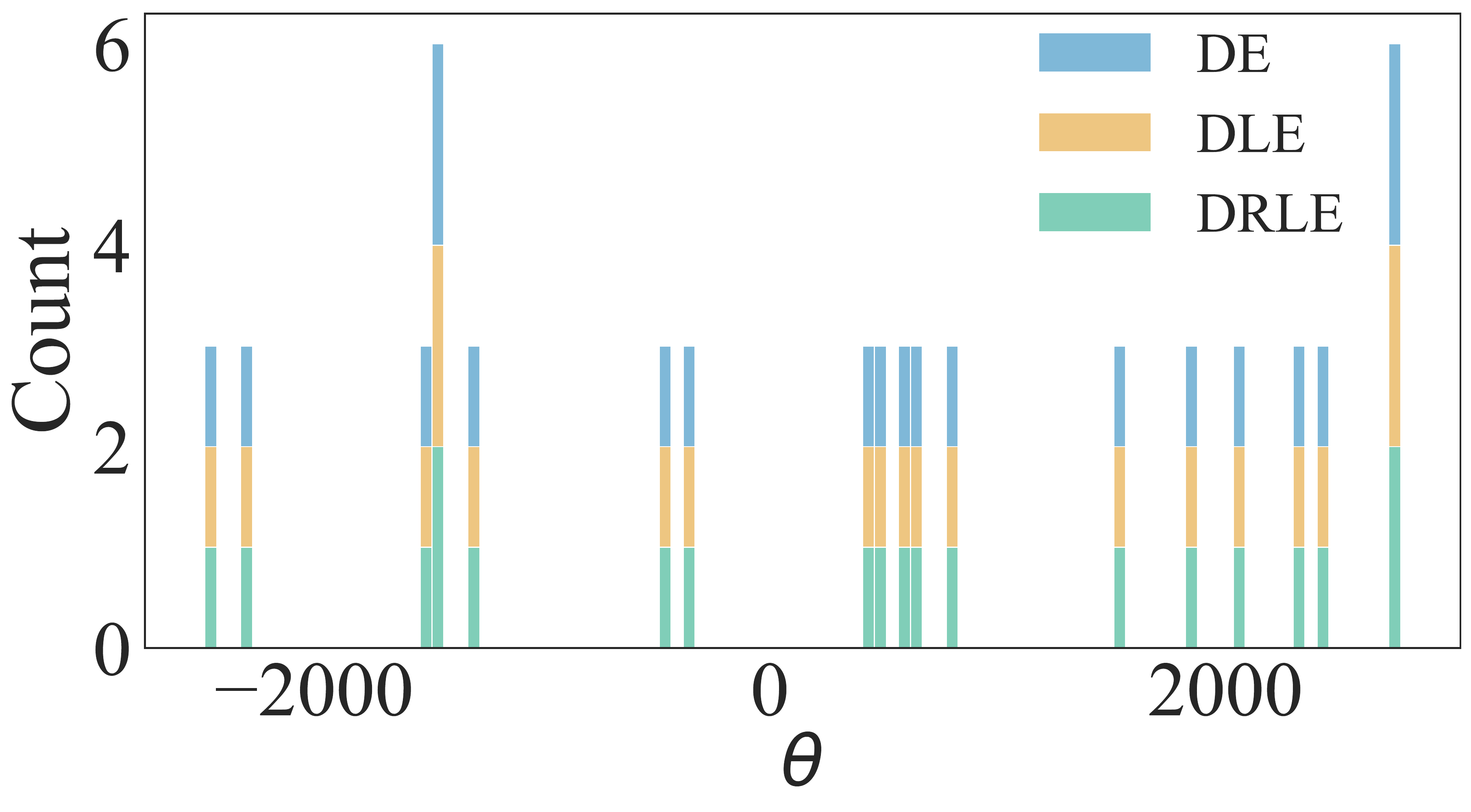}
    \caption{We generate $N_E=20$ samples from the three infinite-dimensional gradient descent procedures discussed in Section \ref{sec:optimisation_pm}.
    The $x$-axis shows the location of the particles after training. 
    Since the same initialisation $\theta_n(0)$ is chosen for all methods,  we observe that particles fall into the same local modes. Further, 16/20 particles are alone in their respective local modes and the location of the particles varies only very little between the different methods (which is why they are in the same bucket in the above histogram). See also Figure \ref{fig:multimodal4} in the Appendix for an alteration of Figure \ref{fig:multimodal} with only 4 particles which emphasizes the same point.}
    \label{fig:countless_modes}
\end{center}
\vspace{-0.5cm}
\end{figure}
\begin{table}%
\centering
\resizebox{\textwidth}{!}{
\begin{tabular}{clllllllll}
\toprule
 {} &                 KIN8NM &                 %
           CONCRETE &                 ENERGY &                   NAVAL &                   POWER &                 PROTEIN &                   WINE &                  YACHT \\
\midrule
DE  &  $\bm{0.33_{\pm 0.1}}$ %
&     ${6.10_{\pm 0.3}}$ &     ${2.83_{\pm 0.2}}$ &     ${-0.40_{\pm 0.3}}$ &  $\bm{13.70_{\pm 2.6}}$ &  $\bm{11.22_{\pm 2.2}}$ &    ${14.65_{\pm 1.9}}$ &     ${2.20_{\pm 0.4}}$ \\
DLE &    ${13.25_{\pm 4.3}}$ %
&  $\bm{5.11_{\pm 0.2}}$ &  $\bm{2.43_{\pm 0.1}}$ &      ${3.46_{\pm 2.4}}$ &     ${13.87_{\pm 2.3}}$ &    ${43.20_{\pm 12.5}}$ &    ${13.73_{\pm 1.4}}$ &  $\bm{1.64_{\pm 0.1}}$ \\
DRLE  &     ${0.46_{\pm 0.1}}$ %
&     ${8.30_{\pm 0.6}}$ &     ${4.01_{\pm 0.3}}$ &  $\bm{-3.04_{\pm 0.2}}$ &     ${23.21_{\pm 2.0}}$ &     ${48.80_{\pm 2.1}}$ &  $\bm{7.13_{\pm 0.6}}$ &     ${7.80_{\pm 2.7}}$ \\
\bottomrule
\end{tabular}}
\vspace*{0.1cm}
\caption{Table compares the average (Gaussian) negative log likelihood in the test set for the three ID-GVI methods on some UCI-regression data sets \citep{UCI}. We observe that no method consistently outperforms any of the others.}
\label{tab:UCI:eval_nll}
\vspace{-0.5cm}
\end{table}

\section{Conclusion}

In this paper, we used infinite-dimensional gradient descent via Wasserstein gradient flows (WGFs) \citep[see e.g.][]{ambrosio2005gradient} and the lens of generalised variational inference (GVI) \citep{knoblauch2019generalized} to unify a collection of existing algorithms under a common conceptual roof.
Arguably, this reveals the WGF to be a powerful tool to analyse ensemble methods in deep learning and beyond.
Our exposition offers a fresh perspective on these methodologies, and plants the seeds for new ensemble algorithms inspired by our theory. 
We illustrated this by deriving  a new algorithm that includes a repulsion term, and use our theory to prove that ensembles produced by the algorithm converge to a global minimum.
A number of experiments showed that the theory developed in the current paper is useful, and showed why the performance difference between simple deep ensembles and more intricate schemes may not be numerically discernible for loss landscapes with many local minima.

\section*{Acknowledgements}
VW was supported by the scatchered scholarship and the EPSRC grant EP/W005859/1. SG was a PhD student of the EPSRC CDT in Modern Statistics and Statistical Machine Learning (EP/S023151/1), and also received funding from the Oxford Radcliffe Scholarship and Novartis. JK was funded by  EPSRC grant EP/W005859/1.

\bibliography{biblio}

\newpage
\appendix
\renewcommand{\theHsection}{A\arabic{section}}

\section{Existence and uniqueness of global minimiser}\label{ap:Existence_and_uniqueness}
In this section, we discuss assumptions under which the global minimiser of the optimisation problem
\begin{align}
    L(Q) = \int \ell(\theta) \, d Q(\theta) + \lambda D(Q,P) \label{eq:GVI-objective_ap}
\end{align}
over $\mathcal{P}(\bbR^J)$ exists and is unique. We assume throughout that the optimisation problem is not pathological, in the sense that there exists a measure $\widehat{Q} \in \mathcal{P}(\bbR^J)$ such that $L(\widehat{Q}) < \infty$. This is in applications often trivial to verify. A good candidate for $\widehat{Q}$ is typically the reference measure $P$. 

\textbf{Loss assumptions} Let $\ell: \bbR^J \to \bbR$ be a loss satisfying the following assumptions:
\begin{enumerate}
   \item[(L1)] The loss $\ell$ is bounded from below which means that
    \begin{align}
      c:=  \inf \big\{ \ell(\theta) : \theta \in \bbR^J \big\}  > - \infty.
    \end{align}
    
    \item[(L2)] The loss is norm-coercive which means that
    \begin{align} 
        \ell(\theta) \to \infty 
    \end{align}
    if $\|\theta \| \to \infty$. 
    \item[(L3)] The loss $\ell$ is lower semi-continuous which means that
    \begin{align}
        \liminf_{\theta \to \theta_0} \ell(\theta) \ge \ell(\theta_0)
    \end{align}
    for all $\theta_0 \in \bbR^J$.
\end{enumerate}

\textbf{Regulariser assumptions} Let $D: \mathcal{P}(\bbR^J) \times  \mathcal{P}( \bbR^J) \to [0, \infty]$ be a regulariser and $P \in \mathcal{P}(\bbR^J)$ a reference measure. We define $D_P(\cdot) :=  D(\cdot, P)$ for notational  convenience. We assume the following for $D_P$:

\begin{enumerate}
    \item[(D1)] The function $D_P$ is lower semi-continuous w.r.t. to the topology of weak-convergence, i.e. for all sequences $\big( Q_n \big)_{n \in \bbN} \subset \mathcal{P}(\bbR^J)$ and all $Q$ with $D_P(Q) < \infty$, it holds that $Q_n \overset{\mathcal{D}}{\longrightarrow} Q$ implies 
    \begin{align}
        \liminf_{n \to \infty} D_P(Q_n) \ge D_P(Q).
    \end{align}
    Here, $\overset{\mathcal{D}}{\longrightarrow}$ denotes convergence in distribution.

    \item[(D2)] $D_P$ is strictly convex, i.e. for all $Q_1 \neq Q_2 \in \mathcal{P}(\bbR^J)$ with $D_P(Q_1) < \infty$ and $D_P(Q_2) < \infty$, it holds that 
    \begin{align}
    D_P\big( \alpha Q_1 + (1- \alpha) Q_2 \big) < \alpha D_P(Q_1) + (1- \alpha) D_P(Q_2)
    \end{align}
    with $\alpha \in (0,1)$.
\end{enumerate}

The next theorem provides an existence result for the optimisation problem \eqref{eq:GVI-objective_ap}. The result is similar in spirit to Lemma 2.1 in \citet{knoblauch2021optimization} with the important difference that our assumptions are easier to verify, since they are formulated in terms of $\ell$ and $D_P$.

\begin{theorem}[Existence of global minimiser]\label{thm:existence}
Under the assumptions (L1)-(L3) and (D1) there exists a probability measure $Q^* \in \mathcal{P}(\bbR^J)$ with 
\begin{align}
    L(Q^*) = \inf \big\{ L(Q) \, : \, Q \in \mathcal{P}(\bbR^J) \big\}.
\end{align}
\end{theorem}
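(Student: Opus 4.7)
The plan is to use the direct method of the calculus of variations. Set $L^* := \inf\{L(Q) : Q \in \mathcal{P}(\bbR^J)\}$; the existence of some $\widehat{Q}$ with $L(\widehat{Q}) < \infty$ gives $L^* < \infty$, while $\ell \geq c$ and $D_P \geq 0$ give $L^* \geq c > -\infty$. I would take a minimising sequence $(Q_n)_{n \in \bbN}$ with $L(Q_n) \to L^*$; since both summands of $L$ are bounded below and their sum is bounded above, both $\int \ell \, dQ_n$ and $D_P(Q_n)$ are bounded uniformly in $n$.

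The first substantive step is tightness of $(Q_n)$, which follows from norm-coercivity (L2) together with the uniform bound $C := \sup_n \int \ell \, dQ_n < \infty$. For any $M > c$, pick $R > 0$ via (L2) so that $\ell(\theta) \geq M$ whenever $\|\theta\| > R$; then
\begin{equation*}
C \geq \int \ell \, dQ_n \geq c\, Q_n(\|\theta\| \leq R) + M\, Q_n(\|\theta\| > R) = c + (M-c)\, Q_n(\|\theta\| > R),
\end{equation*}
so $Q_n(\|\theta\| > R) \leq (C-c)/(M-c)$, which becomes arbitrarily small uniformly in $n$ as $M \to \infty$. Prokhorov's theorem then yields a subsequence $(Q_{n_k})$ and a limit $Q^* \in \mathcal{P}(\bbR^J)$ with $Q_{n_k} \overset{\mathcal{D}}{\longrightarrow} Q^*$.

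The final step is to pass to the limit in both terms of $L$ using lower semi-continuity. For the linear term I would invoke Skorokhod's representation theorem to find random variables $X_k \sim Q_{n_k}$ and $X^* \sim Q^*$ on a common probability space with $X_k \to X^*$ almost surely; by (L3), $\ell(X^*) \leq \liminf_k \ell(X_k)$ a.s., and since $\ell - c \geq 0$, Fatou's lemma delivers $\int \ell \, dQ^* \leq \liminf_k \int \ell \, dQ_{n_k}$. For the regulariser, assumption (D1) gives $\liminf_k D_P(Q_{n_k}) \geq D_P(Q^*)$. Summing the two inequalities, $L(Q^*) \leq \liminf_k L(Q_{n_k}) = L^* \leq L(Q^*)$, so $Q^*$ attains the minimum.

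The main technical subtlety is the lower semi-continuity of $Q \mapsto \int \ell \, dQ$ along weakly convergent sequences when $\ell$ is unbounded above; the Skorokhod representation used above reduces this to Fatou's lemma and is the cleanest route. A secondary point is that (D1) is literally stated only for limit measures with $D_P(Q^*) < \infty$; under the standard interpretation of lower semi-continuity for $[0,\infty]$-valued functions the required inequality holds unconditionally, so no additional argument is needed to close the loop.
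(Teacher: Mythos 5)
Your proposal is correct and follows essentially the same route as the paper's proof: a minimising sequence, tightness from coercivity (L2) plus the uniform bound on $\int \ell \, dQ_n$, Prokhorov's theorem, and lower semi-continuity of both terms to pass to the limit. The only differences are cosmetic---you prove tightness by a direct Markov-type estimate where the paper argues by contradiction on the cubes $[-k,k]^J$, and you establish lower semi-continuity of $Q \mapsto \int \ell \, dQ$ via Skorokhod representation and Fatou where the paper cites Lemma 5.1.7 of \citet{ambrosio2005gradient}; your closing remark about the finiteness restriction in (D1) flags an implicit reading that the paper's own proof also relies on.
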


\begin{proof}
Let $c > - \infty$ be the lower bound for $\ell$. It follows immediately that $L(Q) \ge c $ for all $Q \in \mathcal{P}(\bbR^J)$ since $D(P,Q) \ge 0$. As a consequence we know that 
\begin{align}
 \infty >   L^\ast := \inf \big\{ L(Q) \, : \, Q \in \mathcal{P}(\bbR^J) \big\} \ge c > - \infty .
\end{align}
By definition of the infimum we can construct a sequence $l_n = L(Q_n) \in \bbR $ in the image of $L$ such 
\begin{equation}
    l_n \to L^*
\end{equation}
for $n \to \infty$. We now show by contradiction that the corresponding sequence $\big( Q_n \big) \subset \mathcal{P}(\bbR^J)$ is \textit{tight}\footnote{A sequence of probability measures $(Q_n)$ is called tight if and only if for every $\epsilon >0$ there exists a compact set $K \in \bbR^J$ such that for all $n \in \bbN$ holds: $Q_n(K) > 1- \epsilon$.}. Assume that $\big( Q_n \big)$ is not tight. By definition we can then find an $\epsilon > 0$ such that for each $k \in \bbN$ there exists $n=n_k \in \bbN$ with $Q_{n_k}([-k,k]^J) \le 1-\epsilon$. We set $A_k := [-k,k]^J \subset \bbR^J$ and obtain
\begin{align}
    l_{n_k} &= L(Q_{n_k}) \\ 
             &= \int_{A_k} \ell(\theta) \, d Q_{n_k}(\theta) + \int_{\bbR^J \backslash A_k} \ell(\theta) \, d Q_{n_k}(\theta) + \lambda D(Q,P) \\
             &\ge  \int_{A_k} \ell(\theta) \, d Q_{n_k}(\theta) + \int_{\bbR^J \backslash A_k} \ell(\theta) \, d Q_{n_k}(\theta) \\
             &\ge c  Q_{n_k}(A_k) + \inf \big\{ \ell(\theta) \, : \, \theta \in \ \bbR^J \backslash A_k \big\}  Q_{n_k}(\bbR^J \backslash A_k) \\
             &\ge c  Q_{n_k}(A_k) + \epsilon \inf \big\{ \ell(\theta) \, : \, \theta \in \ \bbR^J \backslash A_k \big\}.
\end{align}
Due to the coerciveness of $\ell$, we know that $\inf \big\{ \ell(\theta) \, : \, \theta \in \ \bbR^J \backslash A_k \big\} \to \infty$ for $k \to \infty$ and therefore $l_{n_k} \to \infty$ for $k \to \infty$.  However, this is a contradiction: The sequence $(l_n)$ is convergent and therefore in particular bounded. As a consequence, it cannot contain the unbounded sub-sequence $(l_{n_k})$. It follows that the sequence $(Q_n)$ is tight. By Prokhorov's theorem we can now extract a sub sequence $(Q_{n_k})$ of $(Q_n)$ and a measure $Q^* \in \mathcal{P}(\bbR^J)$ such that 
\begin{align}
    Q_{n_k} \overset{\mathcal{D}}{\longrightarrow} Q^*
\end{align}
for $k \to \infty$. Due to Lemma 5.1.7 in \citet{ambrosio2005gradient} the lower semi-continuity of $\ell$ implies that 
$Q \mapsto \int \ell(\theta) \, dQ(\theta)$ is lower semi-continuous. This combined with the lower semi-continuity of $D_P$ gives
\begin{align}
    \liminf_{k \to \infty} L(Q_{n_k})  \ge  L(Q^*).
\end{align}
From this it immediately follows that 
\begin{align}
    L(Q^*) \le \liminf_{k \to \infty} L(Q_{n_k}) = L^*,
\end{align}
but by definition $L^*$ is the global minimum of $L$ which implies $L^* \le L(Q^\ast)$. We therefore conclude that $L(Q^*) = L^*$.
\end{proof}
Theorem \ref{thm:existence} only shows the existence of a global minimiser. In order to show uniqueness we use the convexity assumption (D2).
The proof is the same as in finite dimensions and only included for completeness.
\begin{theorem}[Uniqueness of global minimiser]\label{thm:uniqueness}
Assume that (D2) holds. Then, the global minimiser of $L$ is unique (whenever it exists).
\end{theorem}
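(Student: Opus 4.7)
The plan is the classical convexity-based uniqueness argument, transferred to the measure-valued setting: suppose for contradiction that there exist two distinct global minimisers $Q_1^*, Q_2^* \in \mathcal{P}(\bbR^J)$ with $L(Q_1^*) = L(Q_2^*) = L^*$, and build a strict improvement as a convex combination.

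First I would record that $L^* < \infty$ by the non-pathological-problem assumption stated at the beginning of the appendix, so both $L(Q_1^*)$ and $L(Q_2^*)$ are finite. Combined with the lower bound $\ell \ge c > -\infty$ from (L1), this forces $D_P(Q_i^*) \le (L^* - c)/\lambda < \infty$ for $i = 1, 2$, which is the finiteness hypothesis needed to invoke strict convexity in (D2). Now fix any $\alpha \in (0,1)$ and let $Q_\alpha := \alpha Q_1^* + (1-\alpha) Q_2^* \in \mathcal{P}(\bbR^J)$.

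Next I would split $L(Q_\alpha)$ into its two components. The data term is linear in $Q$, so by definition of $Q_\alpha$,
\begin{equation*}
\int \ell(\theta) \, dQ_\alpha(\theta) = \alpha \int \ell(\theta) \, dQ_1^*(\theta) + (1-\alpha) \int \ell(\theta) \, dQ_2^*(\theta).
\end{equation*}
For the regularisation term, since $Q_1^* \neq Q_2^*$ and both have finite $D_P$, assumption (D2) applies and yields
\begin{equation*}
D_P(Q_\alpha) < \alpha D_P(Q_1^*) + (1-\alpha) D_P(Q_2^*).
\end{equation*}
Adding the two (after multiplying the divergence term by $\lambda > 0$) gives the strict inequality
\begin{equation*}
L(Q_\alpha) < \alpha L(Q_1^*) + (1-\alpha) L(Q_2^*) = L^*,
\end{equation*}
which contradicts the definition of $L^*$ as the infimum of $L$ over $\mathcal{P}(\bbR^J)$. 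Hence no two distinct minimisers can coexist, and the minimiser---if it exists---is unique.

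There is no real obstacle here: the argument is essentially one paragraph long because the whole machinery of strict convexity has been packaged into (D2). The only point that requires any care is verifying that both candidate minimisers have finite $D_P$ (so that (D2) can be invoked), which follows from combining the non-pathological assumption with (L1). Everything else is a direct transcription of the standard Euclidean uniqueness proof for strictly convex functionals.
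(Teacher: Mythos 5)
Your proposal is correct and follows essentially the same route as the paper's proof: assume two distinct minimisers, form a strict convex combination, use linearity of the loss term together with the strict convexity of $D_P$ from (D2), and contradict the minimality of $L^*$. Your additional check that both minimisers have finite $D_P$ (via (L1) and the non-pathology assumption) is a small point of extra care that the paper leaves implicit, but it does not change the argument.
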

\begin{proof}
Assume there exits two probability measures $Q_1, Q_2 \in \mathcal{P}(\bbR^J)$ such that 
\begin{align}
    L(Q_1) = L^*  = L(Q_2).
\end{align}
where $\infty > L^*:= \inf \big\{ L(Q) \, : \, Q \in \mathcal{P}(\bbR^J) \big\} > - \infty$.  We define the probability measure $Q_3 := \frac{1}{2} Q_1 + \frac{1}{2} Q_3$. By strict convexity we obtain
\begin{align}
    L(Q_3) < \frac{1}{2} L(Q_1) + \frac{1}{2} L(Q_3) = L^*,
\end{align}
which is a contradiction to $Q_1$ and $Q_2$ being global minimisers.
\end{proof}

Note that in the literature on GVI \citep{knoblauch2019generalized} it is common to assume that the regulariser is definite, i.e.   
\begin{align}
        D(P,Q) = 0 \Longleftrightarrow P=Q
\end{align}
for all $P,Q \in \mathcal{P}(\bbR^J)$. We did not use this assumption in neither Theorem \ref{thm:existence} nor Theorem \ref{thm:uniqueness}. However, the next lemma shows that it is basically implied by strict convexity.

\begin{lemma}
Let $D_P:\mathcal{P}(\bbR^J) \to [0, \infty]$ be strictly convex and assume further
$
    D(Q,Q) = 0  \text{ for all } Q \in \mathcal{P}(\bbR^J).
$
Then it follows that $D(Q,P) = 0$ implies $P=Q$.
\end{lemma}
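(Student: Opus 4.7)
The plan is to derive a contradiction by exploiting strict convexity along the segment joining $Q$ and $P$. Suppose for contradiction that $D(Q,P)=0$ but $Q\neq P$. The hypothesis $D(Q',Q')=0$ for every $Q' \in \mathcal{P}(\bbR^J)$ gives us, applied to $Q'=P$, that $D(P,P)=0$. Thus both $Q$ and $P$ are distinct points in the domain at which $D_P$ attains the value $0$, and in particular both are finite.

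Next, I would form the midpoint mixture $R := \tfrac{1}{2}Q + \tfrac{1}{2}P$. Since $Q \neq P$, both $D_P(Q) < \infty$ and $D_P(P) < \infty$, the strict convexity assumption (D2) applies directly with $\alpha = 1/2$, yielding
\begin{equation*}
D_P(R) \; < \; \tfrac{1}{2} D_P(Q) + \tfrac{1}{2} D_P(P) \; = \; \tfrac{1}{2}\cdot 0 + \tfrac{1}{2}\cdot 0 \; = \; 0.
\end{equation*}
This contradicts the nonnegativity $D_P \geq 0$ built into the codomain $[0,\infty]$ of $D$. Hence the initial assumption fails and $Q=P$.

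There is no real obstacle here: the argument is just the standard observation that a strictly convex nonnegative functional cannot attain its minimum at two distinct points. The only thing to verify is that the hypotheses of (D2) (finiteness at both endpoints and distinctness) are met, which is immediate from $D(Q,P)=0$ and $D(P,P)=0$. No additional regularity or topological input is needed, so the proof is essentially two lines once the contradiction is set up.
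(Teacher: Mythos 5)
Your proof is correct and follows essentially the same route as the paper: both apply strict convexity to the midpoint mixture $\tfrac{1}{2}Q + \tfrac{1}{2}P$ using $D(P,P)=0$, and then contradict the nonnegativity of $D$. Your explicit check that the finiteness hypotheses of (D2) hold is a small clarifying addition but does not change the argument.
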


\begin{proof}
We prove the claim by contradiction. Assume that there exists $P \neq Q$ such that $D(P,Q)=0$. The strict convexity and $D(P,P)=0$ imply combined that
\begin{align}
    D(\frac{1}{2} P + \frac{1}{2} Q, P) &< \frac{1}{2} D(P,P) + \frac{1}{2} D(Q,P) \\
    &= 0.
\end{align}
However, we know that $ D(\frac{1}{2} P + \frac{1}{2} Q, P) \ge 0$ by assumption. This is a contradiction.
\end{proof}

\textbf{Discussion on loss assumptions} The assumptions on the loss $\ell$ in (L1) and (L3) are rather weak. Typically loss functions in machine learning are bounded from below and continuous (and therefore in particular lower semi-continuous). However, norm-coercivity can be violated. Consider for example the squared loss
\begin{align}
    \ell(\theta):=  \sum_{n=1}^N \big( y_n - f_\theta(x_n) \big)^2,
\end{align}
where $f_\theta$ is the parametrisation of a neural network with one hidden layer, i.e. $\theta=(w,A)$ and 
\begin{equation}
    f_\theta(x) = w^T \sigma(Ax),
\end{equation}
where $\sigma: \bbR \to \bbR$ is an activation function which is applied pointwise to the vector $Ax$ and has the property that $\sigma(0) = 0$. It is now possible to find a sequence of parameters $(\theta_k)_{k \in \bbN} \subset \bbR^J$ with $\|\theta_k\| \to \infty$ such that $\ell(\theta_k)$ does not converge to infinity. Define $w_k := k \big( 1 \hdots 1 \big)$,  $A_k:=0$ and $\theta_k = (w_k \ A_k)$ for $k \in \bbN$. Then we obviously have that 
\begin{align}
    \| \theta_k \| = \|w_k\| \to \infty
\end{align}
for $k \to \infty$ but 
\begin{align}
    \ell(\theta_k) &= \sum_{n=1}^N \big( y_n - f_{\theta_k}(x_n) \big)^2 \\
    &=\sum_{n=1}^N \big( y_n - w^T \sigma(0)\big)^2\\
    &= \sum_{n=1}^N y_n^2,
\end{align}
which is constant and therefore does not converge to $\infty$. A similar, but notationally more involved, construction can be made for neural networks with more than one hidden layer. However, this is an issue that can be easily resolved by adding what is known as weight decay to the loss. For example, consider for $\gamma >0$ the loss 
\begin{align}
    \ell(\theta):= \sum_{n=1}^N \big( y_n - f_\theta(x_n) \big)^2 + \gamma \|\theta\|^2
\end{align}
with weight decay. This loss is by construction norm-coercive and therefore the previous existence proof applies.

\textbf{Discussion on regulariser assumptions} The assumptions (D1) and (D2) are quite weak. The KL-divergence for example is known to be lower semi-continuous \citep[Theorem 3.7]{polyanskiy2014lecture} and strictly convex \citep[Theorem 4.1]{polyanskiy2014lecture}. This immediately implies lower semi-continuity and convexity of $\KL(\cdot, P)$ for any fixed $P$. The MMD is also known to be strictly convex \citep[Lemma 25]{arbel2019maximum}, whenever it is well-defined, which can be guaranteed under weak assumptions on $\kappa$ \citep[Lemma 3.1]{muandet2017kernel}. The lower semi-continuity properties also depend on the kernel $\kappa$. However, for bounded kernels it is trivial to verify. We include the proof for completeness, but assume this has been shown before elsewhere.

\begin{lemma}\label{Lemma:MMD_cont}
Let the kernel $\kappa: \bbR^J \times \bbR^J$ be continuous and bounded: $\|\kappa \|_{\infty} := \sup_{\theta, \theta' \in \bbR^J} |k(\theta,\theta')| < \infty$ and $P$ be fixed. Then $\MMD(\cdot,P)$ is continuous and therefore, in particular, lower semi-continuous. 
\end{lemma}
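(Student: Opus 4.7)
I would prove continuity of $\MMD(\cdot, P)$ on $\mathcal{P}(\bbR^J)$ (equipped with the topology of weak convergence) by showing that $Q_n \overset{\mathcal{D}}{\longrightarrow} Q$ implies $\MMD(Q_n, P)^2 \to \MMD(Q, P)^2$, and then invoking continuity of $x \mapsto \sqrt{x}$ on $[0, \infty)$. The natural decomposition is $\MMD(Q, P)^2 = A(Q) - 2B(Q) + C$, where $A(Q) = \iint \kappa(\theta, \theta') \, dQ(\theta) dQ(\theta')$, $B(Q) = \iint \kappa(\theta, \theta') \, dQ(\theta) dP(\theta')$, and $C = \iint \kappa(\theta,\theta') \, dP(\theta) dP(\theta')$ does not depend on $Q$. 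It thus suffices to show $A(Q_n) \to A(Q)$ and $B(Q_n) \to B(Q)$.

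The term $B$ is the easy part: define $g(\theta) := \int \kappa(\theta, \theta') \, dP(\theta')$. Boundedness of $\kappa$ gives $|g| \le \|\kappa\|_\infty$, and continuity of $g$ follows from continuity of $\kappa$ via dominated convergence with the integrable dominant $\|\kappa\|_\infty$. Hence $g \in C_b(\bbR^J)$ and $B(Q_n) = \int g \, dQ_n \to \int g \, dQ = B(Q)$ directly from the definition of weak convergence.

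The diagonal term $A$ is the main obstacle, since the integrand depends on $Q_n$ through both arguments and cannot be rewritten as an integral of a single fixed bounded continuous function against $Q_n$. My plan is to prove the auxiliary claim that $Q_n \otimes Q_n \overset{\mathcal{D}}{\longrightarrow} Q \otimes Q$ on $\bbR^J \times \bbR^J$, from which $A(Q_n) \to A(Q)$ follows at once by weak convergence against the bounded continuous test function $\kappa$. For the auxiliary claim, tightness of $\{Q_n\}$ (implied by weak convergence on $\bbR^J$) transfers to tightness of $\{Q_n \otimes Q_n\}$ on the product space, so by Prokhorov's theorem every subsequence admits a further subsequence $Q_{n_k} \otimes Q_{n_k}$ converging weakly to some $\pi \in \mathcal{P}(\bbR^J \times \bbR^J)$. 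Testing against product functions $f(\theta, \theta') = g(\theta) h(\theta')$ with $g, h \in C_b(\bbR^J)$ yields $\int f \, d\pi = \lim_k \int g \, dQ_{n_k} \int h \, dQ_{n_k} = \int g \, dQ \int h \, dQ = \int f \, d(Q \otimes Q)$. Since such product functions form a separating class for probability measures on the product space, $\pi = Q \otimes Q$, and the standard subsequence principle upgrades this to convergence along the full sequence. Combining the limits for $A$ and $B$ gives $\MMD(Q_n, P)^2 \to \MMD(Q, P)^2$, and continuity of the square root completes the proof.
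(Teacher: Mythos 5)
Your proposal is correct and follows essentially the same route as the paper: decompose $\MMD(\cdot,P)^2$ into the $Q\otimes Q$ term, the cross term $\int \mu_P \, dQ$ with the bounded continuous kernel mean embedding, and a constant, then pass to the limit using weak convergence. The only difference is one of detail---the paper asserts $Q_n \otimes Q_n \overset{\mathcal{D}}{\longrightarrow} Q \otimes Q$ as immediate, whereas you supply the standard tightness/Prokhorov/separating-class argument (and make the final square-root step explicit), which is a welcome but not essentially different elaboration.
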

\begin{proof}
Let $( Q_n )_{n \in \bbN}$ and $ Q^*$ be such that 
\begin{align}
    Q_n \overset{\mathcal{D}}{\longrightarrow} Q^*
\end{align}
for $n \to \infty$. This immediately implies that 
\begin{align}
    Q_n \otimes Q_n \overset{\mathcal{D}}{\longrightarrow} Q^* \otimes Q^*
\end{align}
for $n \to \infty$, where $Q^* \otimes Q^*$ denotes the product measure of $Q^*$ with itself. Further, note that the kernel mean embedding $\mu_P$ is continuous as integral with respect to the second component of a continuous function and bounded since
\begin{align}
    |\mu_P(\theta) |  &= | \int \kappa(\theta, \theta') \, dP(\theta') | \\
    &\le \int | \kappa( \theta, \theta') | d P(\theta') \\
    &\le \| \kappa \|_{\infty}.
\end{align}
By the definition of weak convergence for measures, we therefore have 
\begin{align}
    &\iint \kappa(\theta, \theta') \, d (Q_n \otimes Q_n)(\theta, \theta') \longrightarrow \iint \kappa(\theta, \theta') \, d (Q^* \otimes Q^*)(\theta, \theta') \\
    &\int \mu_P(\theta) \, d Q_n(\theta)  \longrightarrow  \int \mu_P(\theta) \, dQ^{*}(\theta)
\end{align}
for $n \to \infty$. This immediately implies continuity of $\MMD(\cdot, P)$ with respect to the topology of weak convergence. 
\end{proof}

Notice that most kernels common in machine learning, such as the squared exponential or the Matérn kernel, are continuous and bounded and therefore Lemma \ref{Lemma:MMD_cont} applies.

\begin{remark}
    The astute reader may have noticed that our existence proof only guarantees the existence of measure $Q^\ast \in \mathcal{P}(\bbR^J)$. However, the Wasserstein gradient flow is by definition only formulated in the space of probability measures with finite second moment, denoted $\mathcal{P}_2(\bbR^J)$. Assumptions which guarantee that $Q^\ast \in \mathcal{P}_2(\bbR^J)$ are easy to formulate. For example, we can require that there exists $C>0$ and $R>0$ such that the loss $\ell$ satisfies
    \begin{align}
        |\ell(\theta) | > C \| \theta \|^2 \label{eq:finite_moments}
    \end{align}
    for all $\|\theta\| > R $. This immediately implies that $Q^* \in \mathcal{P}_2(\bbR^J)$ since otherwise 
    \begin{align}
        \int |\ell(\theta)| \, dQ^\ast(\theta) = \infty
    \end{align}
    gives a contradiction to the finiteness of $L(Q^\ast)$. However, even if \eqref{eq:finite_moments} is violated, the reference measure $P $ may still guarantee that $Q^\ast \in \mathcal{P}_2(\bbR^J)$. For example, if $P \in \mathcal{P}_2(\bbR^J)$, then $D_P(Q^\ast)$ will typically be large if $Q^\ast \notin \mathcal{P}_2(\bbR^J)$ and the global minimiser is therefore in a sense \textit{unlikely} to have fat tails. We therefore assume $Q^\ast \in \mathcal{P}_2(\bbR^J)$ throughout the paper and consider it to be a minor practical concern.
\end{remark}

\section{Realising the Wasserstein gradient flow}\label{ap:realise_WGF}

In this section, we identify a suitable stochastic process that allows us to follow the WGF. 

 Let $L^{\text{fe}}: \mathcal{P}(\bbR^J) \to (-\infty,\infty]$ be the free energy discussed in Section \ref{sec:realising-WGF} given as
\begin{align}
    L^{\text{fe}}(Q) := \int V(\theta) \, d Q(\theta) + \frac{\lambda_1}{2} \int \kappa(\theta,\theta') \, dQ(\theta) dQ(\theta') + \lambda_2 \int \log \big( q(\theta)\big) q(\theta) \, d\theta, \label{eq:free_energy_ap}
\end{align}
where $\lambda_1,\lambda_2 \ge 0$ are constants, $V:\bbR^J \to \bbR$ is the potential, $\kappa: \bbR^J \times \bbR^J \to \bbR$ is symmetric. We will write $L$ for $L^{\text{fe}}$ from now on to simplify notation. The Wasserstein gradient of $L$ is given as \citep[cf. Chapter 9.1 ][Equation 9.4]{villani2021topics} 
\begin{align}
    \nabla_W L[Q](\theta) = \nabla V (\theta) + \lambda_1  (\nabla_1 \kappa \ast Q)(\theta) + \lambda_2 \nabla \log \big( q(\theta) \big),
\end{align}
where $\nabla_1 \kappa:\bbR^J \times \bbR^J \to \bbR^J$ is the (vector-valued) derivative of $\kappa$ with respect to the first component, $\nabla $ denotes the euclidean gradient with respect to $\theta$ and $(\nabla_1 \kappa \ast Q)(\theta) := \int \nabla_1 \kappa(\theta, \theta') \, dQ(\theta') $ for $\theta \in \bbR^J$. The corresponding Wasserstein gradient flow is therefore given as \citep[cf. Chapter 9.1][ Equation 9.3]{villani2021topics}  
\begin{align}
    \partial_t q(t,\theta) = \nabla \cdot \Big( q(t,\theta) \big( \nabla V (\theta) + \lambda_1 (\nabla_1 \kappa \ast Q)(\theta) + \lambda_2 \nabla \log \big( q_t(\theta) \big) \big) \Big). \label{eq:wgf_free_energy}
\end{align}
In general the probability density evolution of a stochastic process is---via the Fokker-Planck equation---associated with the adjoint of the (infinitesimal) generator of the stochastic process. 
We will therefore try to identify the generator associated to the density evolution in \eqref{eq:wgf_free_energy}. To this end let $h \in C^2_c(\bbR^J,\bbR)$ where $C^2_c(\bbR^J,\bbR)$ denotes the space of twice continuously differentiable functions with compact support. We multiply both sides of \eqref{eq:wgf_free_energy} with $h$, integrate, and apply the partial integration rule to obtain 
\begin{align}
    \dfrac{d}{dt} \int h(\theta) q(t,\theta) \, d\theta &= - \int  \, \nabla_W L\big[Q(t)\big](\theta)  \cdot \nabla h(\theta) \, q(t,\theta) \, d\theta. \\
    &= - \int \big( \nabla V(\theta) + \lambda_1 (\nabla_1 \kappa \ast Q_t)(\theta) \big) \cdot \nabla h(\theta) \, dQ_t(\theta) \\
    &- \lambda_2 \int \nabla \log \big( q_t(\theta) \big) \cdot \nabla h(\theta) \, dQ_t(\theta). \label{eq:diffusion_term}
\end{align}
By chain-rule and partial integration, \eqref{eq:diffusion_term} can be rewritten as
\begin{align}
    - \lambda_2 \int \nabla \log \big( q_t(\theta) \big) \cdot \nabla h(\theta) \, dQ_t(\theta) &= - \lambda_2 \int \nabla q_t(\theta) \cdot \nabla h(\theta) \, d\theta \\
    &= \lambda_2 \int \Delta h(\theta) \, d Q_t(\theta).
\end{align}
Putting everything together, we obtain
\begin{align}
    \dfrac{d}{dt} \int h(\theta) q(t,\theta) \, d\theta = \int \big(A[Q(t)]h\big)(\theta) \, d Q_t(\theta), \label{eq:generator_mckean_vaslov}
\end{align}
where $\big\{A[Q]\big\}_{Q \in \mathcal{P}(\bbR^J)}$ is a family of operators defined as
\begin{align}
    \big(A[Q] h\big)(\theta) := - \Big( \nabla V(\theta) + \lambda_1 (\nabla_1 \kappa \ast Q)(\theta) \Big) \cdot \nabla h(\theta) + \lambda_2 \Delta h.
\end{align}
for $h \in C^2_c(\bbR^J, \bbR)$. The reader may recognize this operator family as the generator of a so called \textit{nonlinear Markov processes} \citep[Chapter 1.4]{kolokoltsov2010nonlinear}. The nonlinearity in this case refers to the dependency on the measure $Q$. Linear Markov processes have no measure-dependency. This family of generators corresponds to a McKean-Vlasov process of the form
\begin{align}
    d \theta(t) = - \Big( \nabla V(\theta(t)) + \lambda_1 (\nabla_1 \kappa \ast Q_t)(\theta(t)) \Big)  dt + \sqrt{2\lambda_2 } d B(t), \label{eq:mckean_vaslov_sde}
\end{align}
where $\big(B(t)\big)_{t > 0}$ is a Brownian motion and $Q_t$ the law of $\theta(t)$. In other words: The solution to \eqref{eq:mckean_vaslov_sde}  has the time marginals $Q(t)$ such that \eqref{eq:generator_mckean_vaslov} holds for every $h \in C^2_c(\bbR^J,\bbR)$. Furthermore, the corresponding pdfs $\big(q(t)\big)$ satisfy the nonlinear Fokker-Planck equation given as
\begin{align}
    \partial_t q_t = A^*[Q_t] q_t, \label{eq:nonlinear_FPE}
\end{align}
where $A^*[Q]$ denotes the $L^2$-adjoint of the operator $A[Q]$ and is given as
\begin{align}
    \big(A^*[Q] h\big)(\theta) = \nabla \cdot \Big( h(\theta) \big( \nabla V (\theta) + \lambda_1 (\nabla_1 \kappa \ast Q)(\theta) + \lambda_2 \nabla \log \big( h(\theta) \big) \big) \Big)
\end{align}
for $h \in C^2_c(\bbR^J,\bbR)$\citep[cf. equation (1.1)-(1.4)]{barbu2020nonlinear}. Note that \eqref{eq:nonlinear_FPE} corresponds exactly to the Wasserstein gradient flow equation in \eqref{eq:wgf_free_energy}. We can therefore follow the WGF by simulating solutions to \eqref{eq:mckean_vaslov_sde}.

The standard approach to simulate solutions to \eqref{eq:mckean_vaslov_sde} \citep{veretennikov2006ergodic} is to use an ensemble of interacting particles. Formally, we replace $Q(t)$ by $\frac{1}{N_E} \sum_{n=1}^{N_E} \delta_{\theta_n(t)}$ and obtain
\begin{align}
    d \theta_n(t) = - \Big( \nabla V \big(\theta_n(t) \big) + \frac{\lambda_1}{N_E} \sum_{j=1}^{N_E} (\nabla_1 \kappa) \big( \theta_n(t), \theta_j(t) \big) \Big) dt + \sqrt{2\lambda_2}  d B_n(t) \label{eq:particle_system_ap}
\end{align}
for $n=1,\hdots,N_E$ where $N_E\in \bbN$ denotes the number of particles. The Euler-Maruyama approximation of \eqref{eq:particle_system_ap} leads to the final algorithm:

\begin{itemize}
    \myitem{\textbf{Step 1:}} Initialise $N_E \in \bbN$ particles $\theta_{1,0}, \hdots, \theta_{N_E,0}$ from a use chosen initial distribution $Q_0$.
    \myitem{\textbf{Step 2:}} Evolve the particles forward in time according to
    \begin{align}
        \theta_{n,k+1} = \theta_{n,k} - \eta \Big( \nabla V \big(\theta_{n,k} \big) + \frac{\lambda_1}{N_E} \sum_{j=1}^{N_E} (\nabla_1 \kappa) \big( \theta_{n,k}, \theta_{j,k} \big) \Big) + \sqrt{2 \eta \lambda_2} Z_{n,k} \label{eq:particle_method2}
    \end{align}
     for $n=1,\hdots,N_E$, $k=0,\hdots,T-1$ with $Z_{n,k} \sim \mathcal{N}(0,I_{J\times J})$.
\end{itemize}

Note that $\theta_{n,k}$ is thought of as approximation of $\theta_n(t)$ at position $t=k \eta$. Furthermore, as discussed in Section \ref{sec:optimisation_pm}, various choices of $V$, $\lambda_1$ and $\lambda_2$ allow us to implement the WGF for different regularised optimisation problems in the space of probability measures. This is summarised below:
\begin{itemize}
    \item Deep ensembles: $V(\theta) = \ell(\theta), \, \lambda_1=0, \, \lambda_2=0$
    \item Deep Langevin ensembles: $V(\theta) = \ell(\theta) - \lambda \log p(\theta), \, \lambda_1 := 0, \,  \lambda := \lambda_2$
    \item Deep repulsive Langevin ensembles: $V(\theta)= \ell(\theta) - \lambda_1 \log p(\theta) - \lambda_2 \mu_P(\theta)$
\end{itemize}

\section{Asymptotic distribution of particles: unregularised objective}\label{ap:asym_distr_unreg}
In this section, we investigate the asymptotic distribution of the WGF for the objective
\begin{align}
    L(Q):= \int \ell(\theta) \, dQ(\theta) 
\end{align}
for $Q \in \mathcal{P}(\bbR^J)$. The associated particle method is: 
\begin{itemize}
    \item Sample $\theta_1(0),\hdots, \theta_{N_E}(0)$ independently from $Q_0$.
    \item Simulate (deterministically) $\theta_n'(t) = - \nabla \ell\big(\theta_n(t)\big)$ for $n=1,\hdots,N_E$.
\end{itemize}
We start by introducing some notation for the deterministic gradient system. Let   $\phi^t(\theta_0)$ denote the solution to the ordinary differential equation (ODE)
\begin{align}
&\theta(0) = \theta_0 \in \bbR^J \\
&\theta'(t) = - \nabla \ell\big(\theta(t)\big)
\end{align}
at time $t >0$. In a first step, we show the following lemma, which is a simple application of the famous Lojasiewicz theorem \citep{colding2014lojasiewicz}, and the fact that Lebesgue almost every initialisation leads to a local minimum \citep{lee2016gradient}.

\begin{lemma}\label{lemma:convegence_loc_min}
Assume $\ell: \bbR^J \to \bbR$ is norm-coercive and satisfies the Lojasiewicz inequality, i.e. for every $\theta \in \bbR^J$ exists an environment $U$ of $\theta$ and constants $0 < \gamma < 1$ and $C>0$ such that 
\begin{align}
     | \ell(\theta) - \ell(\bar{\theta}) |^\gamma < C | \nabla \ell (\theta) |.
\end{align}
for all $\bar{\theta} \in U$. Then we know that $\phi^t(\theta_0) $ converges for $t \to \infty$ to a local minimum of $\ell$ for Lebesgue almost every $\theta_0 \in \bbR^J$.
\end{lemma}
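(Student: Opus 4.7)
The plan is to combine two classical ingredients: (i) a Lojasiewicz-type argument showing that bounded gradient-flow trajectories converge to a single critical point, and (ii) a stable-manifold argument showing that the set of initialisations whose trajectories accumulate at a non-minimum is Lebesgue-negligible. Norm-coercivity of $\ell$ will be used to ensure boundedness, so we can pass from ``accumulation'' to genuine convergence via Lojasiewicz.

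First I would establish precompactness of trajectories. For any fixed $\theta_0 \in \bbR^J$, we have the energy dissipation identity
\begin{equation*}
\frac{d}{dt}\ell\bigl(\phi^t(\theta_0)\bigr) = -\bigl\|\nabla \ell\bigl(\phi^t(\theta_0)\bigr)\bigr\|^2 \le 0,
\end{equation*}
so $\phi^t(\theta_0)$ stays inside the sublevel set $S_{\theta_0}:=\{\theta: \ell(\theta) \le \ell(\theta_0)\}$. Norm-coercivity implies $S_{\theta_0}$ is bounded, hence the trajectory is precompact with a non-empty $\omega$-limit set $\omega(\theta_0) \subset S_{\theta_0}$, and every $\theta^\ast \in \omega(\theta_0)$ is a critical point of $\ell$ (standard consequence of $\int_0^\infty \|\nabla\ell(\phi^t)\|^2\,dt < \infty$, which follows from the monotone boundedness of $t\mapsto\ell(\phi^t(\theta_0))$).

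Next I would upgrade this to convergence to a single critical point using the Lojasiewicz inequality. Pick any accumulation point $\theta^\ast \in \omega(\theta_0)$, take the neighbourhood $U$ and constants $(\gamma,C)$ from the Lojasiewicz assumption, and consider the auxiliary function $t\mapsto \bigl(\ell(\phi^t(\theta_0))-\ell(\theta^\ast)\bigr)^{1-\gamma}$. A short computation, using the chain rule together with the Lojasiewicz inequality, shows this function has derivative bounded below in modulus by a positive multiple of $\|\nabla \ell(\phi^t(\theta_0))\| = \|\tfrac{d}{dt}\phi^t(\theta_0)\|$. Integrating on the set of times during which the trajectory lies in $U$ yields a finite bound on the arc length $\int \|\tfrac{d}{dt}\phi^t(\theta_0)\|\,dt$, which together with the fact that $\theta^\ast$ is an accumulation point forces $\phi^t(\theta_0) \to \theta^\ast$ as $t\to\infty$. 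This is the standard Lojasiewicz convergence theorem for gradient flows; I would just reference it rather than write the bookkeeping in full.

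Finally I would rule out convergence to non-minima for a.e. initialisation. The limit $\theta^\ast$ is always a critical point, so it is either a local minimum, a saddle, or a local maximum. Invoking the continuous-time analogue of the stable-manifold result of \citet{lee2016gradient}, the set
\begin{equation*}
W := \{\theta_0 \in \bbR^J : \lim_{t\to\infty}\phi^t(\theta_0) \text{ exists and is not a local minimum}\}
\end{equation*}
is contained in a countable union of stable manifolds of strict saddles/maxima, each of dimension strictly less than $J$, and therefore has Lebesgue measure zero. Combining this with the previous paragraph gives convergence to a local minimum for Lebesgue a.e.\ $\theta_0$. The main obstacle here is justifying the ``strict saddle'' picture in our generality: the stable-manifold argument requires enough smoothness and non-degeneracy of critical points to guarantee the stable manifolds have positive codimension, and strictly speaking the Lojasiewicz assumption alone does not give this. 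In the proof I would therefore either invoke an additional (mild) genericity hypothesis on $\ell$ or appeal directly to \citet{lee2016gradient} in the form used in the statement, which is what the lemma explicitly allows.
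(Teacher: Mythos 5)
Your proposal is correct and follows essentially the same route as the paper: boundedness of the trajectory from norm-coercivity and energy dissipation, the Lojasiewicz theorem to upgrade an accumulation point to a genuine limit which is a critical point, and the continuous-time analogue of the stable-manifold argument of \citet{lee2016gradient} to show that the initialisations converging to non-minima form a Lebesgue-null set. Your closing caveat about needing a strict-saddle/non-degeneracy hypothesis for the stable-manifold step is a fair point that the paper itself glosses over by simply deferring to \citet{lee2016gradient} and \citet{bressan2003tutorial}.
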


\begin{proof}
First we show that $t \mapsto \phi^t(\theta_0)$ is bounded. We proof this by contradiction. Assume that $\phi^t(\theta_0)$ is unbounded. Then there exists a subsequence $(t_n)_{n \in \bbN} \subset [0,\infty)$ with $t_n \to \infty$ for $n \to \infty$ such that 
\begin{align}
    |\phi^{t_n}(\theta_0)| \to \infty
\end{align}
for $n \to \infty$. The norm-coercivity immediately implies that 
\begin{align}
    \ell \big( \phi^{t_n}(\theta_0) \big) \to \infty
\end{align}
for $n \to \infty$. However, this contradicts 
\begin{align}
    \ell \big( \phi^{t}(\theta_0) \big) \le \ell \big( \phi^{0}(\theta_0) \big) = \ell(\theta_0) < \infty,
\end{align}
where the first inequality follows from the fact that $t \mapsto \ell \big( \phi^{t}(\theta_0) \big) $ is decreasing, which is a consequence of 
\begin{align}
    \frac{d}{dt} \ell \big( \phi^{t}(\theta_0) \big) &= \nabla \ell \big( \phi^{t}(\theta_0) \big) \frac{d}{dt} \phi^{t}(\theta_0) \\
    &= - | \nabla \ell \big( \phi^{t}(\theta_0) \big) |^2 \le 0.
\end{align}
Hence $t \mapsto \phi^t(\theta_0)$ is bounded. By the Bolzano-Weierstrass theorem we can find a sequence  $(t_n)_{n \in \bbN} \subset [0,\infty)$ with $t_n \to \infty$ and a point $\theta_\infty \in \bbR^J$ such that 
\begin{align}
    \phi^{t_n}(\theta_0) \to \theta_\infty
\end{align}
for $n \to \infty$. Hence $\big( \phi^t(\theta_0) \big)_{t >0}$ has the accumulation point $\theta_\infty$. The Lojasiewicz theorem \citep{colding2014lojasiewicz} allows us to deduce that 
\begin{align}
    \phi^{t}(\theta_0) \to \theta_\infty 
\end{align}
for $t \to \infty$, and that $\theta_\infty $ satisfies $\nabla \ell ( \theta_\infty) =0 $.

It remains to show that $\theta_\infty$ is not a saddle point for Lebesgue almost every initial value $\theta_0$. However, this is very similar to the proof in \cite{lee2016gradient}. The only difference is that one would need to use a continuous-time version of the stable manifold theorem, which is readily available, for example in \citet{bressan2003tutorial}.
\end{proof}

Let $ \{ m_i \}_{i \in \bbN}$ denote the local minima of $\ell$ which are by assumption countable. Denote further by
\begin{align}
    \Theta_i := \big \{ \theta_0 \in \bbR^J \, : \, \lim_{t \to \infty} \phi^t(\theta_0) \to m_i \big \}
\end{align}
the domain of attraction for the minimum $m_i$. The next theorem is then an easy consequence of Lemma \ref{lemma:convegence_loc_min}.

\begin{theorem}\label{thm:ap_DE}
Assume that the loss function $\ell$ only has countably many local minima, is norm coercive, and satisfies the Lojasiewicz inequality. Let further $\theta_0 \sim Q_0$ for some $Q_0 \in \mathcal{P}(\bbR^J)$ such that $\sum_{i=1}^\infty Q_0(\Theta_i)=1$. Then,
\begin{align}
\phi^t(\theta_0) \overset{\mathcal{D}}{\longrightarrow} \sum_{i=1}^\infty Q_0(\Theta_i) \, \delta_{m_i} =: Q_{\infty}
\end{align}
for $t \to \infty$. Here $ \overset{\mathcal{D}}{\longrightarrow}$ denotes convergence in distribution.
\end{theorem}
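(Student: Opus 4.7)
The plan is to exploit the pointwise (i.e., $Q_0$-almost sure) convergence of $\phi^t(\theta_0)$ furnished by Lemma \ref{lemma:convegence_loc_min}, upgrade it to weak convergence of the push-forward measures $(\phi^t)_\# Q_0$, and identify the limit as the claimed atomic measure $Q_\infty$.

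First I would define a limit map $\theta_\infty: \bigcup_i \Theta_i \to \bbR^J$ that records where a trajectory ends up: set $\theta_\infty(\theta_0) := m_i$ whenever $\theta_0 \in \Theta_i$. The hypothesis $\sum_i Q_0(\Theta_i) = 1$ guarantees that $\theta_\infty$ is defined $Q_0$-almost everywhere, and the sets $\Theta_i$ are pairwise disjoint because a gradient trajectory has a unique limit (whenever one exists). Each $\Theta_i$ is Borel-measurable: if $\ell \in C^1$ with locally Lipschitz gradient, the flow maps $\phi^t$ are continuous in $\theta_0$, and
\begin{align*}
\Theta_i = \bigcap_{k \in \bbN} \bigcup_{T \in \bbN} \bigcap_{t \ge T} \Big\{ \theta_0 \in \bbR^J : \| \phi^t(\theta_0) - m_i \| < 1/k \Big\},
\end{align*}
which is Borel. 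Consequently, the distribution of $\theta_\infty$ under $Q_0$ is exactly
\begin{align*}
(\theta_\infty)_\# Q_0 = \sum_{i=1}^\infty Q_0(\Theta_i) \, \delta_{m_i} = Q_\infty.
\end{align*}

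Second, by the very definition of $\Theta_i$ and Lemma \ref{lemma:convegence_loc_min}, for $Q_0$-almost every $\theta_0$ we have $\phi^t(\theta_0) \to \theta_\infty(\theta_0)$ as $t \to \infty$. Since almost sure convergence of random variables implies convergence in distribution of their laws, we conclude
\begin{align*}
\phi^t(\theta_0) \overset{\mathcal{D}}{\longrightarrow} Q_\infty,
\end{align*}
which is the statement of the theorem. For the particle-system formulation in Theorem \ref{thm:DE_distribution}, the additional claim of independence across $n = 1,\dots,N_E$ is immediate from the fact that the initial conditions are sampled independently from $Q_0$ and then transported by the deterministic flow $\phi^t$.

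The only non-routine ingredient is the $Q_0$-a.s.\ convergence to a local minimum supplied by Lemma \ref{lemma:convegence_loc_min}; the remainder is measure-theoretic bookkeeping (disjointness, measurability, and push-forward identification). The main technical obstacle therefore lies entirely in that lemma --- specifically the Lojasiewicz inequality, which rules out non-convergent accumulation behaviour, and the stable-manifold argument of \citet{lee2016gradient}, which rules out convergence to saddle points from a Lebesgue-positive set of initial conditions. The countability assumption on $\{m_i\}$ is used only so that the limiting mixture is indexed by $\bbN$; if it were relaxed, the same argument would still yield $(\phi^t)_\# Q_0 \Rightarrow (\theta_\infty)_\# Q_0$, only the explicit atomic form of the limit would need to be reformulated.
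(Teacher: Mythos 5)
Your proposal is correct and follows essentially the same route as the paper's proof: pointwise ($Q_0$-almost sure) convergence of $\phi^t(\theta_0)$ to the random terminal minimum, the fact that almost sure convergence implies convergence in distribution, and identification of the limit law as $\sum_i Q_0(\Theta_i)\,\delta_{m_i}$. The extra remarks on measurability of $\Theta_i$ and the push-forward formulation are harmless bookkeeping additions to the same argument.
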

\begin{proof}
Let $\theta_0 \in \bbR^J$ be fixed. Due to Lemma \ref{lemma:convegence_loc_min}, we know that 
\begin{align}
    \phi^t(\theta_0) \to \sum_{i=1}^\infty m_i \mathbbm{1} \{ \theta_0 \in \Theta_i  \}
\end{align}
for Lebesgue almost every $\theta_0$ for $t \to \infty$. Here, $\mathbbm{1}\{ \cdot \}$ denotes the indicator function. Let $Y$ now be a random variable with law $Q_0$. By assumption, we know that  $ Y \in \Theta_i$ for some $i \in \bbN$ with probability $1$. Hence,
\begin{align}
    \phi^t(Y) \to \sum_{i=1}^\infty m_i \mathbbm{1} \{ Y \in \Theta_i  \}
\end{align}
almost surely for $t \to \infty$. Since almost sure convergence implies convergence in distribution, we conclude that 
\begin{align}
    \phi^t(Y) \overset{\mathcal{D}}{\longrightarrow} \mathcal{L} \big( \sum_{i=1}^\infty m_i \mathbbm{1} \{ Y \in \Theta_i  \} \big),
\end{align}
where $\mathcal{L}(\cdot)$ denotes the law of a random variable. However, the law of the RHS is easily recognised as 
\begin{align}
    \mathcal{L} \big( \sum_{i=1}^\infty m_i \mathbbm{1} \{ Y \in \Theta_i  \} \big) = \sum_{i=1}^\infty Q_0(\Theta_i) \delta_{m_i},
\end{align}
which concludes the proof.
\end{proof}

\begin{remark}
Note that the condition
\begin{align}
    \sum_{i=1}^\infty Q_0(\Theta_i) = 1 \label{eq:Q_0_condition}
\end{align}
in Theorem \ref{thm:ap_DE} is easy to satisfy. According to Lemma \ref{lemma:convegence_loc_min} the set 
\begin{align}
    \bbR^{J} \backslash \bigcup_{i=1}^{n}\Theta_{i}
\end{align}
has Lebesgue measure zero. Therefore, any $Q_0$ which has a density w.r.t. the Lebesgue measure will satisfy \eqref{eq:Q_0_condition}.
\end{remark}

\section{Asymptotic distribution for deep Langevin ensembles}\label{ap:asymptotic_KL}

In this section, we analyse the objective 
\begin{align}
    L(Q):= \int \ell(\theta) \, dQ(\theta) + \lambda \KL(Q,P)
\end{align}
for $Q \in \mathcal{P}(\bbR^J)$. The corresponding particle method is given as:
\begin{itemize}
    \item Sample $\theta_1(0),\hdots, \theta_{N_E}(0)$ independently from $Q_0$.
    \item Simulate the SDE $d \theta_n(t) = - \nabla V \big(\theta_n(t)\big) dt + \sqrt{2 \lambda} d B_n(t)$ for each $n=1,\hdots,N_E$.
\end{itemize}
Recall that $V(\theta)= \ell(\theta) - \lambda \log p(\theta)$. This case is well-studied in the literature and known as Langevin diffusion. Under mild assumptions \citep{chiang1987diffusion,roberts1996exponential}, 
\begin{align}
    \theta_n(t) \overset{\mathcal{D}}{\longrightarrow} Q_{\infty}
\end{align}
for $t \to \infty$ and each particle $n=1,\hdots,N_E$ independently. The probability measure $Q_\infty$ has the density
\begin{align}\label{eq:Gibbs_measure}
    q_{\infty}(\theta) &= \frac{1}{Z} \exp \big( - \frac{V(\theta)}{\lambda} \big) \\
    &= \frac{1}{Z} \exp\big(- \frac{\ell(\theta)}{\lambda} \big) p(\theta),
\end{align}
where $Z >0$ is the normalising constant. As a consequence, the WGF asymptotically produces samples from $Q_\infty$. However, it is a priori unclear that $Q_\infty$ is in fact the same as the global minimiser $Q^\ast$ of $L$.

We investigate this question by relating invariant measures to stationary points of the Wasserstein gradient. 

\begin{definition}\citep[Thm. 3.3.7]{liggett2010continuous}
A measure $Q$ is called an invariant measure (for a given Feller-process) if 
\begin{align}
    \int Ah(\theta) \, dQ(\theta) = 0 \label{eq:invariant measure}
\end{align}
for all $h \in C^2_c(\bbR^J)$. Here $A$ is the infinitesimal generator of the corresponding Feller-process.
\end{definition}

Recall that the infinitesimal generator of the Langevin diffusion for $h \in C^2_c(\bbR^J)$ is given as
\begin{align}
    A h = - \nabla V  \cdot \nabla h + \lambda \Delta h. \label{eq:generator_langevin_diff}
\end{align}

\begin{definition}\label{Def:stat_point}
A measure $Q \in \mathcal{P}_2(\bbR^J)$ is called a stationary point of the Wasserstein gradient if 
\begin{align}
   \nabla_{W} L [Q](\theta)  = 0
\end{align}
for $Q-$almost every $\theta \in \bbR^J$.
\end{definition}

In finite dimensions, it is well-known that a local minimiser is a stationary point of the gradient. This carries over to the infinite-dimensional case, with a similar proof. Since we could not find this result anywhere in the literature we included it for completeness.

\begin{lemma}\label{Lemma:local_min}
Let $\widehat{Q}$ be a local minimiser of $L$, i.e. there exits and $\epsilon > 0$ such that
\begin{align}
    L(\widehat{Q}) \le L(Q)
\end{align}
for all $Q$ with $W_2(\widehat{Q},Q) \le \epsilon$. Then $\widehat{Q}$ is a stationary point of the Wasserstein gradient in the sense of Definition \ref{Def:stat_point}.
\end{lemma}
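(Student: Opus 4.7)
The plan is to reproduce the classical Euclidean argument ("at an interior local minimum the gradient vanishes") in the Wasserstein setting, by constructing an explicit one-parameter family of admissible perturbations of $\widehat{Q}$ and differentiating $L$ along it.

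First, I would fix an arbitrary smooth compactly supported vector field $\xi \in C_c^\infty(\bbR^J, \bbR^J)$ and consider the perturbed measures
\begin{equation*}
    \widehat{Q}_s := (\mathrm{id} + s \xi)_{\#} \widehat{Q}, \qquad s \in \bbR,
\end{equation*}
where $\#$ denotes pushforward. Because $\mathrm{id}+s\xi$ is a valid transport map from $\widehat{Q}$ to $\widehat{Q}_s$, we have the elementary estimate
\begin{equation*}
    W_2(\widehat{Q}, \widehat{Q}_s)^2 \le \int \|s\xi(\theta)\|_2^2 \, d\widehat{Q}(\theta) = s^2 \|\xi\|_{L^2(\widehat{Q})}^2.
\end{equation*}
Hence for $|s|$ small enough we have $W_2(\widehat{Q},\widehat{Q}_s) \le \epsilon$, so the local minimality assumption yields $L(\widehat{Q}_s) \ge L(\widehat{Q})$ for all $s$ in a neighbourhood of $0$.

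Next, I would compute the derivative of $s \mapsto L(\widehat{Q}_s)$ at $s=0$. Using the characterisation of the Wasserstein gradient as the Euclidean gradient of the first variation (Exercise 15.10 in \citet{villani2009optimal}, as recalled in Section \ref{sec:gradient-flows}), together with the standard chain rule for pushforwards along smooth vector fields \citep[Chapter 8]{ambrosio2005gradient}, one obtains
\begin{equation*}
    \left. \frac{d}{ds} L(\widehat{Q}_s) \right|_{s=0} = \int \nabla_W L[\widehat{Q}](\theta) \cdot \xi(\theta) \, d\widehat{Q}(\theta).
\end{equation*}
Combining this with the fact that $s=0$ is a local minimum of $s \mapsto L(\widehat{Q}_s)$, and applying the same argument to $-\xi$, forces
\begin{equation*}
    \int \nabla_W L[\widehat{Q}](\theta) \cdot \xi(\theta) \, d\widehat{Q}(\theta) = 0 \qquad \text{for every } \xi \in C_c^\infty(\bbR^J, \bbR^J).
\end{equation*}

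Finally, I would invoke the fundamental lemma of the calculus of variations: since $C_c^\infty(\bbR^J, \bbR^J)$ is dense in $L^2(\widehat{Q}; \bbR^J)$, the vanishing of the pairing above against every such $\xi$ implies $\nabla_W L[\widehat{Q}](\theta) = 0$ for $\widehat{Q}$-almost every $\theta$, which is exactly the claim.

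The main obstacle is the derivative computation in the second step: verifying that the three components of $L^{\mathrm{fe}}$ (the potential, the interaction term, and the entropy) can each be differentiated along the pushforward curve $\widehat{Q}_s$ and that the derivative does indeed equal the $L^2(\widehat{Q})$-pairing of $\nabla_W L[\widehat{Q}]$ with $\xi$. The potential and interaction parts are immediate by dominated convergence (the support of $\xi$ is compact, and $\nabla V$, $\nabla_1 \kappa$ are assumed regular enough); the entropy part is delicate because it uses the Lebesgue density $q$ and requires the change-of-variables formula combined with a careful expansion of $\log \det(I + s D\xi)$, which is where most of the real work sits and where the references to \citet{ambrosio2005gradient} do the heavy lifting.
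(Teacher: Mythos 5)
Your proposal is correct and takes essentially the same route as the paper's own proof: perturb $\widehat{Q}$ by pushforwards, use the chain rule for the Wasserstein gradient to obtain $\int \nabla_W L[\widehat{Q}](\theta)\cdot \xi(\theta)\, d\widehat{Q}(\theta) = 0$ for all test perturbations, and conclude by density in $L^2(\widehat{Q})$. The only (minor) difference is that you perturb linearly along arbitrary compactly supported vector fields $\xi$ via $(\mathrm{id}+s\xi)_{\#}\widehat{Q}$, whereas the paper flows along gradient fields $\nabla h$ of scalar test functions $h \in C_c^2(\bbR^J)$; the two constructions agree to first order, and your choice even makes the concluding density step in $L^2(\widehat{Q};\bbR^J)$ slightly more transparent.
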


\begin{proof}
Let $h \in C_c^2(\bbR^J)$ be arbitrary and $\widehat{Q} \in \mathcal{P}_2(\bbR^J)$ be a local minimum of $L$. Further, let $\phi^t(\theta_0)$ be the solution to the initial value problem
\begin{align}
    \theta(0) &= \theta_0 \\
    \theta'(t) &= \nabla h (\theta(t))
\end{align}
for $t \in (- \epsilon, \epsilon)$ for some $\epsilon > 0$. We now define $Q(t):= \phi^t \# \widehat{Q} $ for $ t \in (- \epsilon, \epsilon)$ where $f\#\mu$ denotes the push-forward of the measures $\mu$ through the function $f$. In the Riemannian interpretation of the Wasserstein space, $\big( Q(t) \big)_{t \in (-\epsilon, \epsilon) }$ is a curve in $\mathcal{P}_2(\bbR^J)$ with tangent vector $h$ at point $\widehat{Q}$ \citep[Chapter 8]{ambrosio2005gradient}. 
We, further, define  $f: (- \epsilon, \epsilon)\to \bbR$ as $f(t):= L(Q(t)) $. Application of the chain-rule \citep[p. 233]{ambrosio2005gradient} gives
\begin{align}
    f'(0) &= \frac{d}{dt}  L(Q(t)) \big|_{t=0} \\
    &=\langle  \nabla_{W} L [Q(0)] , \nabla h \rangle_{L^2(Q(0))} \\
    &= \int \nabla_{W} L [\widehat{Q}](\theta) \cdot \nabla h(\theta) \, d\widehat{Q}(\theta).
\end{align}
We know that $f$ has a local minimum at $t = 0$ and, therefore, $f'(0)=0$ which gives
\begin{align}
   0 = \int \nabla_{W} L [\widehat{Q}](\theta) \cdot \nabla h(\theta) \, d\widehat{Q}(\theta). \label{eq: charc_equation}
\end{align}
Since \eqref{eq: charc_equation} holds for arbitrary test functions $h \in C_c^2(\bbR^J)$ and as $C_c^2(\bbR^J)$ is dense in $L^2(\widehat{Q})$, we obtain that $\nabla_{W} L [\widehat{Q}](\theta)= 0$ for $\widehat{Q}$-a.e $\theta \in \bbR^J$.  
\end{proof}

The next lemma relates invariant measures and stationary points of the Wasserstein gradient for infinitesimal generators of the form \eqref{eq:generator_langevin_diff}. It will prove extremely useful to translate between the Langevin diffusion literature and our optimisation perspective.

\begin{lemma}\label{Lemma: equivalence_wg_ivm}
Let $Q \in \mathcal{P}_2(\bbR^J)$ be such that $Q$ has a density $q$ with respect to the Lebesgue measure. Then, the following two statements are equivalent:
\begin{itemize}
    \item $Q$ is a stationary point of the Wasserstein gradient.
    \item $Q$ is an invariant measure.
\end{itemize}
\end{lemma}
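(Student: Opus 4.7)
The plan is to reduce both conditions to the same orthogonality statement in $L^2(Q; \mathbb{R}^J)$ by identifying the explicit form of the Wasserstein gradient of $L$ and integrating by parts against the generator \eqref{eq:generator_langevin_diff}. Concretely, since the first variation of $Q \mapsto \int \ell \, dQ$ is $\ell$ and that of $\KL(\cdot, P)$ is (up to an irrelevant constant) $\log q - \log p$, one obtains $\nabla_W L[Q](\theta) = \nabla V(\theta) + \lambda \nabla \log q(\theta)$ (well-defined $Q$-a.e. since $Q$ admits the density $q$).

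The first step is to rewrite the invariance condition. For $h \in C_c^2(\bbR^J)$, compact support allows integration by parts against $q$ with no boundary contribution, so that
\begin{align*}
\int A h \, dQ \;=\; \int \bigl( -\nabla V \cdot \nabla h + \lambda \Delta h \bigr) q \, d\theta
\;=\; -\int \nabla h \cdot \bigl( \nabla V + \lambda \nabla \log q \bigr) \, dQ \;=\; -\bigl\langle \nabla h,\, \nabla_W L[Q] \bigr\rangle_{L^2(Q)}.
\end{align*}
Thus $Q$ is invariant if and only if $\langle \nabla h, \nabla_W L[Q] \rangle_{L^2(Q)} = 0$ for every $h \in C_c^2(\bbR^J)$.

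The forward direction (stationary point $\Rightarrow$ invariance) is then immediate: if $\nabla_W L[Q] = 0$ $Q$-a.e., every such inner product vanishes. For the converse, I would invoke the standard characterisation of the tangent space of $\mathcal{P}_2(\bbR^J)$ at $Q$ as the closure in $L^2(Q; \bbR^J)$ of the set $\{\nabla h : h \in C_c^\infty(\bbR^J)\}$ \citep[Chapter 8]{ambrosio2005gradient}. The Wasserstein gradient $\nabla_W L[Q]$ belongs to this tangent space by construction (it is the gradient of the first variation), and the orthogonality derived above says it is perpendicular to a dense subset of that same tangent space. A vector in a Hilbert subspace orthogonal to a dense subset must vanish, so $\nabla_W L[Q] = 0$ in $L^2(Q)$, which is exactly the $Q$-a.e. vanishing required in Definition \ref{Def:stat_point}.

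The one subtle point --- and the main obstacle --- is justifying the density of $\{\nabla h : h \in C_c^2\}$ in the relevant tangent space and confirming that $\nabla_W L[Q]$ does sit in this space for our particular $L$. This is where the assumption that $Q$ has a Lebesgue density matters: without it the decomposition $\nabla_W L[Q] = \nabla V + \lambda \nabla \log q$ has no meaning, and the integration by parts used above fails. Under mild regularity of $V$ and $q$ (enough to make $\int |\nabla_W L[Q]|^2 \, dQ < \infty$ on compact sets and to make the boundary terms in the integration by parts vanish), one can cite the construction of the tangent bundle in \citet[Chapter 8]{ambrosio2005gradient} to complete the argument. No further computation is needed beyond the one display above.
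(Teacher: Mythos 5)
Your proposal is correct and follows essentially the same route as the paper: both reduce the claim to the identity $\int A h \, dQ = -\int \nabla_W L[Q] \cdot \nabla h \, dQ$ for all $h \in C_c^2(\bbR^J)$ via integration by parts against the density $q$ (the paper passes through the Lebesgue-adjoint $A^*[Q]$ and integrates by parts twice, you do it directly, which is only a cosmetic difference). Your explicit tangent-space density argument for the direction ``invariant $\Rightarrow$ stationary'' is a welcome addition, since the paper asserts that implication without spelling out why orthogonality to all $\nabla h$ forces $\nabla_W L[Q]=0$ $Q$-a.e.
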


\begin{proof}
Let $Q$ be a measure with density $q$. Recall that the generator of the Langevin diffusion
is for $h \in C^2_c(\bbR^J)$ given as
\begin{align}
    A h = - \nabla V  \cdot \nabla h + \lambda \Delta h.
\end{align}
By partial integration, it is easy to verify that the $L^2$- adjoint (w.r.t the Lebesgue measure) is given as 
\begin{align}
    A^* h =  \nabla \cdot ( h \cdot \nabla V ) + \lambda \Delta h.
\end{align}
We, therefore, conclude that 
\begin{align}
    \int Ah(\theta) \, dQ(\theta) &= \int Ah(\theta) q(\theta) \, d\theta \\
    &= \int h(\theta) A^*q(\theta) \, d \theta \\
    &= \int h(\theta) \Big(  \nabla \cdot ( q(\theta) \cdot \nabla V(\theta) ) + \lambda \Delta q(\theta) \Big) \, d\theta.
\end{align}
Furthermore, we have $\nabla_W L[Q] = \nabla V + \lambda \nabla \log q$, and therefore
\begin{align}
   \int  \nabla_{W} L [Q](\theta) \cdot \nabla h(\theta) \, dQ(\theta) &=  \int  \nabla_{W} L [Q](\theta) \cdot \nabla h(\theta) q(\theta) \, d\theta \\
   &= \int \Big( \nabla V(\theta) q(\theta) + \lambda \nabla  q(\theta) \Big) \cdot \nabla h(\theta) \, d\theta \\
   &= - \int h(\theta) \Big(  \nabla \cdot ( q(\theta)  \nabla V(\theta) ) + \lambda \Delta q(\theta) \Big) \, d\theta,
\end{align}
where the last line follows from applying partial integration.  This allows us to conclude that 
\begin{align}
     \int Ah(\theta)  \, dQ(\theta) = - \int  \nabla_{W} L [Q](\theta) \cdot \nabla h(\theta) \, dQ(\theta)
\end{align}
whenever $Q$ has a density. As a consequence we have that $Q$ is invariant if and only if it is a stationary point of the Wasserstein gradient.
\end{proof}

Lemma \ref{Lemma: equivalence_wg_ivm} allows us to move between the optimisation and stochastic differential equation perspective. In Appendix \ref{ap:Existence_and_uniqueness}, we discussed the existence and uniqueness of a global minimiser $Q^*$ of $L$. We know that $Q^*$ has a density since the Kullback-Leibler divergence would be infinite otherwise (assuming $P$ has a Lebesgue-density which we assume throughout the paper). Lemma \ref{Lemma:local_min} guarantees that $Q^*$ is a stationary point of the Wasserstein gradient. Due to Lemma \ref{Lemma: equivalence_wg_ivm}, we can infer that $Q^*$ must be an invariant measure. However, due to the uniqueness of the invariant measure under the previously mentioned mild assumptions \citep{chiang1987diffusion,roberts1996exponential}, we can conclude that $Q^*=Q_\infty$. 

\section{Asymptotic distribution of deep repulsive Langevin ensembles}\label{ap:asy_mmd_kl}

In this section, we consider 
\begin{align}
    L(Q) &= \int  \ell(\theta) \, dQ(\theta) + \frac{\lambda_1}{2} \MMD(Q,P)^2 + \lambda_2 \KL(Q,P) \label{eq:drift_diffusion}   \\
    &= \int V(\theta) \, dQ(\theta) + \frac{\lambda_1}{2} \int \kappa(\theta,\theta') \, dQ(\theta) dQ(\theta') - \lambda_2 H(Q) + const, 
\end{align}
as optimisation objective. Here, $H(Q) = - \int \log q(\theta) q(\theta) \, d \theta$ denotes the differential entropy.

Recall that in this case $V(\theta)=\ell(\theta) - \lambda_1 \mu_P(\theta) - \lambda_2 \log p(\theta)$. We already discussed in Appendix \ref{ap:realise_WGF} that the McKean-Vlasov process of the form
\begin{align}\label{eq:mc_kean_vaslov_process}
    \theta(0) &\sim Q_0 \\
    d \theta(t) &= - \Big( \nabla V(\theta(t)) + \lambda_1 (\nabla_1 \kappa \ast Q_t)(\theta(t)) \Big)  dt + \sqrt{2\lambda_2 } d B(t),
\end{align}
with $ \big( B(t) \big)_{t \ge0 } $ being a Brownian motion achieves the desired density evolution. Furthermore, the particle approximation of \eqref{eq:mc_kean_vaslov_process} is given as
\begin{align}
    d \theta_n(t) = - \Big( \nabla V \big(\theta_n(t) \big) + \frac{\lambda_1}{N_E} \sum_{j=1}^{N_E} (\nabla_1 \kappa) \big( \theta_n(t), \theta_j(t) \big) \Big) dt + \sqrt{2\lambda_2}  d B_n(t) \label{eq:particle_system_ap2}
\end{align}
for $n=1,\hdots,N_E$ where $N_E\in \bbN$ denotes the number of particles. 

The approach follows the same procedure as in Appendix \ref{ap:asymptotic_KL}. 
We show the notions of invariant measures and stationary points of the Wasserstein gradient are the same for measures with Lebesgue density. 
We start by introducing the concept of an invariant measure for a nonlinear Markov process \citep[Definition 1]{ahmed1993invariant}.
\begin{definition}
A measure $Q$ is called an invariant measure for a nonlinear Markov process with the family of infinitesimal generators $\big\{ A[Q] \big\}_{Q \in \mathcal{P}(\bbR^J)}$ if 
\begin{align}
    \int A[Q] h(\theta) \, dQ(\theta) = 0 \label{eq:invariant measure_nonlinear}
\end{align}
for all $h \in C^2_c(\bbR^J)$.
\end{definition}

Recall that the family of infinitesimal generators in our case is given as 
\begin{align}
    \big(A[Q] h\big)(\theta) := - \Big( \nabla V(\theta) + \lambda_1 (\nabla_1 \kappa \ast Q)(\theta) \Big) \cdot \nabla h(\theta) + \lambda_2 \Delta h. \label{eq:mckean_vlasov_generator}
\end{align}
for $h \in C_c^2(\bbR^J, \bbR)$.  In analogy to Lemma \ref{Lemma: equivalence_wg_ivm}, we obtain the following result.

\begin{lemma}\label{Lemma:equivalence2}
Let $Q \in \mathcal{P}_2(\bbR^J)$ be such that $Q$ has a density $q$ with respect to the Lebesgue measure. Then, the following two statements are equivalent:
\begin{itemize}
    \item $Q$ is a stationary point of the Wasserstein gradient for $L$ in \eqref{eq:drift_diffusion} in the sense of Def.  \ref{Def:stat_point}.
    \item $Q$ is an invariant measure for the McKean-Vlasov process with infinitesimal generator defined in \eqref{eq:mckean_vlasov_generator}
\end{itemize}
\end{lemma}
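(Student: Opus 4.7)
The plan is to mirror the strategy of Lemma \ref{Lemma: equivalence_wg_ivm} almost verbatim, since the only genuinely new ingredient compared to the pure KL case is the presence of the nonlocal interaction term $\lambda_1 (\nabla_1 \kappa \ast Q)(\theta)$. Because this term appears as an additive drift both in the Wasserstein gradient $\nabla_W L[Q] = \nabla V + \lambda_1 (\nabla_1 \kappa \ast Q) + \lambda_2 \nabla \log q$ and in the generator $A[Q]$, the extension is essentially bookkeeping: I simply need to check that the integration-by-parts identity underpinning Lemma \ref{Lemma: equivalence_wg_ivm} carries over to the interaction contribution.

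Concretely, I would fix $h \in C_c^2(\bbR^J)$ and a measure $Q \in \mathcal{P}_2(\bbR^J)$ admitting a Lebesgue density $q$. First I expand
\begin{align*}
\int A[Q] h(\theta) \, dQ(\theta) = - \int \bigl( \nabla V(\theta) + \lambda_1 (\nabla_1 \kappa \ast Q)(\theta) \bigr) \cdot \nabla h(\theta) \, q(\theta) \, d\theta + \lambda_2 \int \Delta h(\theta) \, q(\theta) \, d\theta,
\end{align*}
and separately
\begin{align*}
\int \nabla_W L[Q](\theta) \cdot \nabla h(\theta) \, dQ(\theta) = \int \bigl( \nabla V(\theta) + \lambda_1 (\nabla_1 \kappa \ast Q)(\theta) \bigr) \cdot \nabla h(\theta) \, q(\theta) \, d\theta + \lambda_2 \int \nabla \log q(\theta) \cdot \nabla h(\theta) \, q(\theta) \, d\theta.
\end{align*}
The first terms on the right-hand sides are identical up to sign, so the only thing to verify is the identity
\begin{align*}
\int \Delta h(\theta) \, q(\theta) \, d\theta = - \int \nabla \log q(\theta) \cdot \nabla h(\theta) \, q(\theta) \, d\theta,
\end{align*}
which follows from $\nabla q = q \nabla \log q$ together with integration by parts (the boundary terms vanish since $h$ has compact support). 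Combining these computations yields
\begin{align*}
\int A[Q] h(\theta) \, dQ(\theta) = - \int \nabla_W L[Q](\theta) \cdot \nabla h(\theta) \, dQ(\theta).
\end{align*}

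From this identity the equivalence is immediate in both directions. If $Q$ is a stationary point of the Wasserstein gradient in the sense of Definition \ref{Def:stat_point}, then $\nabla_W L[Q] = 0$ $Q$-a.e., so the right-hand side vanishes for every $h \in C_c^2(\bbR^J)$, meaning $Q$ is invariant. Conversely, if $Q$ is invariant then $\int \nabla_W L[Q] \cdot \nabla h \, dQ = 0$ for all $h \in C_c^2(\bbR^J)$; since gradients of $C_c^2$ functions are dense in $L^2(Q; \bbR^J)$, this forces $\nabla_W L[Q] = 0$ $Q$-a.e.

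The only mild subtlety (and the main place where one should be careful) is ensuring that $(\nabla_1 \kappa \ast Q)(\theta) \cdot \nabla h(\theta) \, q(\theta)$ is integrable, so that the reordering of the two expansions above is legitimate. Under the standing assumptions on $\kappa$ in the paper (continuous with sufficiently controlled gradient, e.g.\ bounded or at most of linear growth in combination with $Q \in \mathcal{P}_2(\bbR^J)$), this is a straightforward application of Fubini's theorem together with the compact support of $h$; beyond that, the argument is purely mechanical and needs no further ingredients.
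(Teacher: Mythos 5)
Your proof is correct and follows essentially the same route as the paper: both establish the key identity $\int A[Q]h\,dQ = -\int \nabla_W L[Q]\cdot\nabla h\,dQ$ for measures with a Lebesgue density (the paper passes through the $L^2$-adjoint $A^*[Q]$, you cancel the drift terms directly and integrate the entropy term by parts) and then read off the equivalence. The only caveat is your closing density claim: gradients of $C_c^2$ functions are dense in the Wasserstein tangent space at $Q$, not in all of $L^2(Q;\bbR^J)$, but since $\nabla_W L[Q]$ is itself a gradient field this suffices, and the paper's own argument is no more explicit on this point.
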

\begin{proof}
First, we notice that 
\begin{align}
    \int A[Q] h(\theta) d Q(\theta) &= \int A[Q] h(\theta) q(\theta) \, d \theta \\
    &= \int h(\theta) \big( A^*[Q] q \big)(\theta) \, d\theta.    \label{eq:calc1}
\end{align}
Recall, that $A^*[Q]$ denotes the $L^2$-adjoint of the operator $A[Q]$ and that it is given as
\begin{align}
    \big(A^*[Q] h\big)(\theta) = \nabla \cdot \Big( h(\theta) \big( \nabla V (\theta) + \lambda_1 (\nabla_1 \kappa \ast Q)(\theta) + \lambda_2 \nabla \log \big( h(\theta) \big) \big) \Big)
\end{align}
for $h \in C^2(\bbR^J,\bbR)$ with compact support. This implies 
\begin{align}
     \big(A^*[Q] q \big)(\theta)  =  \nabla \cdot \Big( q(\theta) \big( \nabla V (\theta) + \lambda_1 (\nabla_1 \kappa \ast Q)(\theta) \big) \Big) + \lambda_2 \Delta q (\theta).
\end{align}
We plug this into \eqref{eq:calc1} to obtain 
\begin{align}
     &\int A[Q] h(\theta) \, d Q(\theta) \\
     &= \int h(\theta) \, \nabla \cdot \Big( q(\theta) \big( \nabla V (\theta) + \lambda_1 (\nabla_1 \kappa \ast Q)(\theta) \big) \Big) \, d \theta + \int \lambda_2 h(\theta) \Delta q (\theta) \, d\theta . \label{eq:stat_meas_formula}
\end{align}
On the other hand, we have that 
\begin{align}
     \nabla_W L[Q](\theta) =  \nabla V (\theta) + \lambda_1  (\nabla_1 \kappa \ast Q)(\theta) + \lambda_2 \nabla \log  q(\theta),
\end{align}
and therefore 
\begin{align}
    &\int \nabla L[Q](\theta) \cdot \nabla h(\theta) \, d Q(\theta) \\
    &=\int \Big( \nabla V (\theta) + \lambda_1  (\nabla_1 \kappa \ast Q)(\theta) + \lambda_2 \nabla \log  q(\theta) \Big) \cdot \nabla h(\theta) \, d Q(\theta) \\
    &=  \int q(\theta) \big( \nabla V (\theta) + \lambda_1  (\nabla_1 \kappa \ast Q)(\theta) \big) \cdot \nabla h(\theta) \, d\theta + \lambda_2 \int \nabla q(\theta) \cdot \nabla h(\theta) \, d \theta \\
    &= - 
    \int \nabla \cdot \Big( q(\theta) \big( \nabla V (\theta) + \lambda_1  (\nabla_1 \kappa \ast Q)(\theta) \big) \Big) h(\theta) \, d\theta - \lambda_2 \int q(\theta) \Delta h(\theta) \, d\theta, \label{eq:wgf_formula}
\end{align}
where the last line follows from partial integration. Comparing \eqref{eq:stat_meas_formula} to \eqref{eq:wgf_formula} gives
\begin{align}
    \int A[Q] h(\theta) \, d Q(\theta) = - \int \nabla L[Q](\theta) \cdot \nabla h(\theta) \, d Q(\theta)
\end{align}
for all $h \in C^2_c(\bbR^J)$ whenever $Q$ has a density. This immediately implies that $Q$ is invariant iff it is a stationary point.
\end{proof}

Again, we leverage this correspondence between stationary point and invariant measures. There is a rich literature on ergodicity of nonlinear Markov processes. For example, Theorem 2 of \citet{veretennikov2006ergodic} specifies conditions on $\kappa$ and $V$ such that  
\begin{align}
    Q^{n,N_E}(t) \overset{\mathcal{D}}{\longrightarrow} Q_\infty
\end{align}
for $N_E, t \to \infty$. Here $Q^{n,N_E}(t)$ denotes the law of a fixed particle $\theta_n(t)$, $n=1,\hdots,N_E$, whose distribution is characterised by the SDE \eqref{eq:particle_system_ap}. The measure $Q_\infty$ is the unique invariant measure of the nonlinear Markov process. By Lemma \ref{Lemma:equivalence2} every invariant measure is a stationary point of the Wasserstein gradient and vice versa. Hence, existence and uniqueness of the stationary point of the Wasserstein gradient is immediately implied. However, since  the global minimiser $Q^*$ is a stationary point of the Wasserstein gradient (cf. Lemma \ref{Lemma:local_min}), we conclude by uniqueness that $Q_\infty = Q^*$. 

\section{Asymptotic analysis of deep repulsive ensembles}\label{ap:asy_mmd}

In this section, we consider the objective
\begin{align}
    L(Q):= \int \ell(\theta) \, dQ(\theta) + \lambda \MMD(Q,P)
\end{align}
for $Q \in \mathcal{P}(\bbR^J)$. The corresponding McKean-Vlasov process is of the form
\begin{align}
    d \theta(t) = - \Big( \nabla V(\theta(t)) + \lambda (\nabla_1 \kappa \ast Q_t)(\theta(t)) \Big) dt,
\end{align}
where $Q_t$ denotes the distribution of $\theta(t)$ and $V(\theta) = \ell(\theta) - \mu_P(\theta)$ with $\mu_P(\theta) = \int \kappa(\theta,\theta') \, d P(\theta)$ the kernel mean-embedding of $P$.  We call the particle method in this case \textbf{deep repulsive ensembles (DRE)}.

The existence of the global minimiser $Q^*$ is still guaranteed under the assumptions in Appendix \ref{ap:Existence_and_uniqueness}. Lemma \ref{Lemma:local_min} guarantees that $Q^*$ is a stationary point of the Wasserstein gradient, i.e. 
\begin{align}
    \nabla V(\theta) + \lambda (\nabla_1 \kappa \ast Q^*)(\theta) = 0
\end{align}
for $Q^*$-a.e. $\theta \in \bbR^J$. Recall that the infinitesimal generator in this case is given as
\begin{align}
     \big(A[Q] h\big)(\theta) := - \Big( \nabla V(\theta) + \lambda (\nabla_1 \kappa \ast Q)(\theta) \Big) \cdot \nabla h(\theta)
\end{align}
for $Q \in \mathcal{P}(\bbR^J)$, $h \in C_c^2(\bbR^J)$. It immediately follows from the definition that 
\begin{align}
\big(A[Q] h\big)(\theta) = - \nabla_W L[Q](\theta) \cdot \nabla h( \theta) \label{def:mmd_generators}
\end{align}
for all $h \in C_c^2(\bbR^J)$, $\theta \in \bbR^J$. As in Lemma \ref{Lemma: equivalence_wg_ivm} \& \ref{Lemma:equivalence2}, this implies that each stationary point of the Wasserstein gradient is an invariant measure of the McKean-Vlasov process and vice versa. In Appendix \ref{ap:asymptotic_KL} \& \ref{ap:asy_mmd_kl}, we cite relevant literature that guarantees uniqueness of the invariant measure, which is a necessary (but not sufficient) condition for convergence to the invariant measure. The next theorem shows that uniqueness will in general not hold without the presence of the diffusion term.
\begin{theorem}\label{thm:non_unique_mckean-vlasov}
    The invariant measure for the McKean-Vlasov process with the family of generators $\big(A[Q] \big)_{Q \in \mathcal{P}(\bbR^J)}$ defined in \eqref{def:mmd_generators} is (in general) not unique.
\end{theorem}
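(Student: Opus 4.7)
My plan is to exhibit a concrete counterexample: two distinct Dirac measures that are both invariant for a McKean--Vlasov process in this undiffused ($\lambda_2 = 0$) setting. The equivalence \eqref{def:mmd_generators} makes this tractable without needing any density hypothesis, since integrating it against $Q$ itself gives
\begin{equation*}
\int A[Q]h(\theta)\, dQ(\theta) = -\int \nabla_W L[Q](\theta) \cdot \nabla h(\theta)\, dQ(\theta),
\end{equation*}
so $Q$ is invariant iff $\nabla_W L[Q](\theta) = 0$ for $Q$-a.e.\ $\theta$ (the forward implication is immediate; the converse uses that $\{ \nabla h : h \in C_c^2 \}$ is dense in $L^2(Q)$). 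In particular, a Dirac measure $\delta_{\theta_0}$ is invariant iff $\nabla_W L[\delta_{\theta_0}](\theta_0) = 0$, which unpacks to
\begin{equation*}
\nabla V(\theta_0) + \lambda (\nabla_1 \kappa)(\theta_0, \theta_0) = 0.
\end{equation*}

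The second step is to choose ingredients so that this equation has more than one solution. I would take $\kappa$ to be any translation-invariant, smooth, positive-definite kernel (e.g.\ the Gaussian kernel $\kappa(\theta, \theta') = \exp(-\|\theta - \theta'\|^2 / (2 \sigma^2))$), so that $\kappa(\theta, \theta') = \psi(\theta - \theta')$ with $\psi$ attaining its global maximum at $0$; this forces $\nabla \psi(0) = 0$ and hence $(\nabla_1 \kappa)(\theta, \theta) = 0$ for every $\theta \in \bbR^J$. With this choice the invariance condition for Diracs reduces to $\nabla V(\theta_0) = 0$. It is then enough to pick any potential $V$ with at least two distinct critical points $\theta_A \neq \theta_B$ (for instance $V(\theta) = (\theta^2 - 1)^2$ on $\bbR$, whose critical points are $0, \pm 1$). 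The measures $\delta_{\theta_A}$ and $\delta_{\theta_B}$ are then both invariant, establishing non-uniqueness.

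I would round this off with two quick sanity checks. First, the loss $\ell$ and reference $P$ implicitly used in this counterexample (via $V = \ell - \mu_P$) can be chosen to satisfy the assumptions of Appendix \ref{ap:Existence_and_uniqueness}, so a unique global minimiser $Q^*$ still exists; the point of the theorem is precisely that uniqueness of the global minimiser does not imply uniqueness of the invariant measure when the diffusion term is absent. Second, at the level of the particle system \eqref{eq:particle_system_ap2} with $\lambda_2 = 0$, placing every particle at $\theta_A$ (resp.\ $\theta_B$) yields an initial condition that is fixed by the dynamics, which is the probabilistic reflection of the invariance of the corresponding Dirac measure.

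The plan does not rely on any delicate estimates, so I do not anticipate a genuine obstacle. The only point requiring care is the equivalence between stationarity of the Wasserstein gradient and invariance for singular measures: Lemmas \ref{Lemma: equivalence_wg_ivm} and \ref{Lemma:equivalence2} were stated for measures with Lebesgue density, whereas here I need it for Diracs. This is why I prefer to derive the equivalence directly from \eqref{def:mmd_generators} rather than cite those lemmas verbatim; the absence of the Laplacian term in $A[Q]$ makes the density assumption unnecessary.
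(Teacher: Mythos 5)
Your argument is correct, but it takes a genuinely different route from the paper. The paper's proof fixes an arbitrary $N_E \in \bbN$, forms the finite-particle energy $\widetilde{L}(\theta_1,\hdots,\theta_{N_E}) = \sum_i V(\theta_i) + \frac{\lambda}{2N_E}\sum_{i,j}\kappa(\theta_i,\theta_j)$, and uses boundedness from below and norm-coercivity of $V$ to obtain a global minimiser; the first-order conditions at that minimiser say exactly that the empirical measure $\frac{1}{N_E}\sum_j \delta_{\theta_j^*}$ is a stationary point of the Wasserstein gradient, hence invariant, and letting $N_E$ vary yields a whole family of invariant measures. Your construction instead exhibits two Dirac measures at distinct critical points of $V$, exploiting that $(\nabla_1\kappa)(\theta,\theta) = \nabla\psi(0) = 0$ for a smooth translation-invariant (symmetric) kernel so that invariance of $\delta_{\theta_0}$ reduces to $\nabla V(\theta_0)=0$; and you correctly sidestep Lemmas \ref{Lemma: equivalence_wg_ivm} and \ref{Lemma:equivalence2} by verifying invariance directly from the generator \eqref{def:mmd_generators}, which is exactly the right move since those lemmas assume a Lebesgue density. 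What each approach buys: yours is more elementary and gives a completely explicit counterexample in the deep-learning-relevant regime of multimodal $V$, but it needs the kernel-diagonal property $(\nabla_1\kappa)(\theta,\theta)=0$ and a potential with at least two critical points; the paper's construction works for any symmetric kernel and does not require $V$ to be multimodal (critical points of the interacting $N_E$-particle energy do the work), though it implicitly relies on the measures obtained for different $N_E$ being distinct. One small caution: your parenthetical claim that, for general $Q$, invariance implies $\nabla_W L[Q]=0$ $Q$-a.e.\ "by density of $\{\nabla h\}$ in $L^2(Q)$" is not quite right, since gradients of $C^2_c$ functions are only dense in the tangent space of gradient-type vector fields, not in all of $L^2(Q;\bbR^J)$; this does not affect your proof, because you only use the implication from stationarity to invariance together with the exact equivalence for Diracs, where $\nabla h(\theta_0)$ ranges over all of $\bbR^J$.
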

\begin{proof}
    Let $N_E \in \bbN$ and define $\widetilde{L}: 
    \big( \bbR^{J} \big)^{N_E} \to \bbR$ as
    \begin{align}
        \widetilde{L}(\theta_1, \hdots, \theta_{N_E}) := \sum_{i=1}^{N_E} V(\theta_i) + \frac{\lambda}{2 N_E} \sum_{i,j=1}^{N_E} \kappa(\theta_i, \theta_j).
    \end{align}
    Assume that $V$ is bounded from below and norm-coercive. Then $\widetilde{L}$ is bounded from below and norm-coercive and therefore we can find a global minimiser $\theta^{*}:= \big( \theta^{*}_1,\hdots, \theta^{*}_{N_E} \big) \in \big(\bbR^J \big)^{N_E}$ of  $\widetilde{L}$. Since $\widetilde{L}$ is differentiable, we know that $\theta^*$ is a stationary point of the gradient which implies
    \begin{align}
        \nabla V(\theta^{*}_i) + \frac{\lambda}{N_E} \sum_{j=1}^{N_E} (\nabla_1 \kappa)(\theta^{*}_i, \theta^{*}_j) = 0 \label{eq:stat_point}
    \end{align}
    for all $i=1,\hdots,N_E$. Here, we assume that the kernel $\kappa$ is symmetric, which is standard in the MMD literature. Note that \eqref{eq:stat_point} is equivalent to 
    \begin{align}
        \nabla V(\theta) + \lambda (\nabla_1 \kappa \ast \widehat{Q})(\theta) = 0
    \end{align}
    for $\widehat{Q}$-a.e. $\theta \in \bbR^J$ where 
    \begin{align}
        \widehat{Q}(d \theta) := \frac{1}{N_E} \sum_{j=1}^{N_E} \delta_{\theta^{*}_j}(d \theta). 
    \end{align}
    This means that $\widehat{Q}$ is a stationary point of the Wasserstein gradient, and therefore an invariant measure for the McKean-Vlasov process. 
    Since $N_E \in \bbN$ was arbitrary, we have constructed countably many invariant measures and therefore uniqueness can't hold in general. 
\end{proof}

The reason that non-uniqueness of the invariant measure is an immediate contradiction to convergence is the following: If we initialise with any of the invariant measures constructed in the proof of Theorem \ref{thm:non_unique_mckean-vlasov}, then the particle distribution of the McKean-Vlasov process will remain unchanged over time. Convergence to the global minimiser can therefore surely not hold for arbitrary initialisation $Q_0$. It may be possible to construct conditions on $Q_0$ under which convergence still holds. 
For example, for Stein variational gradient descent a similar issue occurs. However, in this case one can guarantee convergence \citep[Theorem 2.8]{lu2019scaling} if $Q_0$ has a Lebesgue-density (and if the kernel satisfies further restrictive assumptions). The existence of conditions that guarantee convergence for DRE remains an open problem.

\section{Implementation details}\label{ap:implementation_details}
In Appendix \ref{ap:Existence_and_uniqueness}, we derived the following algorithm:
\begin{itemize}
    \myitem{\textbf{Step 1:}} Simulate $N_E \in \bbN$ particles  $\theta_{1,0}, \hdots, \theta_{N_E,0}$ from a use chosen initial distribution $Q_0$.
    \myitem{\textbf{Step 2:}} Evolve the particles forward in time according to
    \begin{align}
        \theta_{n,k+1} = \theta_{n,k} - \eta \Big( \nabla V \big(\theta_{n,k} \big) + \frac{\lambda_1}{N_E} \sum_{j=1}^{N_E} (\nabla_1 \kappa) \big( \theta_{n,k}, \theta_{j,k} \big) \Big) + \sqrt{2 \eta \lambda_2} Z_{n,k} \label{eq:particle_method}
    \end{align}
     for $n=1,\hdots,N_E$, $k=0,\hdots,K-1$ with $Z_{n,k} \sim \mathcal{N}(0,I_{J\times J})$.
\end{itemize}
We can generate samples from DE, DLE and DRLE by setting the potential and regularisation parameters as described below:
\begin{itemize}
    \item Deep ensembles: $V(\theta) = \ell(\theta), \, \lambda_1=0, \, \lambda_2=0$
    \item Deep Langevin ensembles: $V(\theta) = \ell(\theta) - \lambda \log p(\theta), \, \lambda_1 = 0, \,  \lambda = \lambda_2$
    \item Deep repulsive Langevin ensembles: $V(\theta)= \ell(\theta) - \lambda_1 \log p(\theta) - \lambda_2 \mu_P(\theta)$
\end{itemize}
Due to Appendix \ref{ap:asymptotic_KL} \& \ref{ap:asy_mmd_kl}, we can think of $\theta_{1,K},\hdots, \theta_{N_E,K}$ as approximately sampled from the global minimiser $Q^*$ for DLE and DRLE if $K$ is large enough. All experiments use the SE kernel given as
\begin{align}
    \kappa(\theta, \theta') = \exp \big( - \frac{\| \theta - \theta' \|^2}{2 \sigma_{\kappa}^2 } \big)
\end{align}
with lengthscale parameter $\sigma_{\kappa} > 0$. The kernel mean embedding $\mu_P$ can easily be approximated as 
\begin{align}
    \mu_P(\theta) = \frac{1}{M} \sum_{i=1}^{M} \kappa(\theta, \theta_i), \, \theta \in \bbR^J, 
\end{align}
where $\theta_1, \hdots, \theta_M \sim P$ independently. We chose $M=20$. 

\subsection{Toy example: global minimiser}
We describe details regarding the experiments conducted to produce Figure \ref{fig:optq} below.

We generate $N_E=300$ particles and make the following choices: \
\begin{itemize}
    \item Loss: $\ell(\theta):=  \frac{3}{2} ( \frac{1}{4} \theta^4 + \frac{1}{3} \theta^3 -\theta^2) - \frac{3}{8} $
    \item Prior: $P \sim \mathcal{N}(0,1)$ and therefore $\log p(\theta) = -\frac{1}{2} \theta^2 $
    \item Initialisation: $Q_0 = P$
    \item Reg. parameter: $\lambda_{DLE} = 1$, $\lambda_{DRLE} = 1$, $\lambda'_{DRLE} =1$
    \item Step size: $\eta = 10^{-4}$, Iterations: $K = 100,000$
    \item Kernel lengthscale, $\sigma_\kappa$, is chosen according to the median heuristic \citep{garreau2017large} based on samples from the prior $P$
\end{itemize}
The loss is constructed such that we have a global minimum at $\theta=-2$, a turning point at $\theta=0$, and a local minimum at $\theta=1$.

\begin{itemize}
    \item Deep ensembles: The optimal $Q^*$ is a Dirac measure located at the global minimiser $\theta=-2$. However, as we proved in Theorem \ref{thm:DE_distribution}, the WGF produce samples from 
    \begin{align}
        Q_{\infty}(d\theta) = \frac{1}{2} \delta_{-2}(d \theta) + \frac{1}{2} \delta_{1}(d \theta),
    \end{align}
    as $(-\infty,0)$ is the region of attraction for the global minimum and $(0,\infty)$ for the local minimum which both have probability $0.5$ under $Q_0=P=\mathcal{N}(0,1)$. In particular, $Q_\infty \neq Q^*$ as expected.
    \item Deep Langevin ensembles: The optimal measure has the pdf
    \begin{align}
        q^*(\theta)  \propto \exp\big( - \frac{\ell(\theta)}{\lambda}  \big) p(\theta) 
    \end{align}
    for $\theta \in \bbR$. As expected the WGF produces samples from $Q^*$.
    \item Deep repulsive Langevin ensembles: The optimal $q^*$ for deep repulsive ensembles is harder to determine. From the condition that $q^*$ is a stationary point of the Wasserstein gradient, we can derive that $u(\theta) := \log q^*(\theta)$ satisfies the integro-differential equation
    \begin{align}
        u'(\theta) = -\frac{1}{\lambda_2} V'(\theta) - \frac{\lambda_1}{\lambda_2} \int (\nabla_1 \kappa)(\theta, \theta') \exp(u(\theta')) \, d\theta'
    \end{align}
    with some initial value $u(0)=u_0$. In principle, we could choose $u_0$ such that $q(\theta):=\exp(u(\theta))$ integrates to $1$. However, since we do not know the appropriate initial condition a priori, we choose an arbitrary $u_0$ and normalise the pdf afterwards. We use an numerical solver to evaluate $u(\theta)$ on a fixed grid. As expected, the WGF produces samples from $Q^*$ in this case.
\end{itemize}

\subsection{Toy example: multimodal loss}\label{ap:multimode_loss}

The details below correspond to the experimental results presented in Figures \ref{fig:multimodal} and \ref{fig:multimodal4}. Figure \ref{fig:multimodal4} is an alteration of Figure \ref{fig:multimodal} with only 4 particles with the goal of stressing the importance of the number of particles relative to the number of local minima.

\begin{figure}
    \centering
    \includegraphics[width=\textwidth]{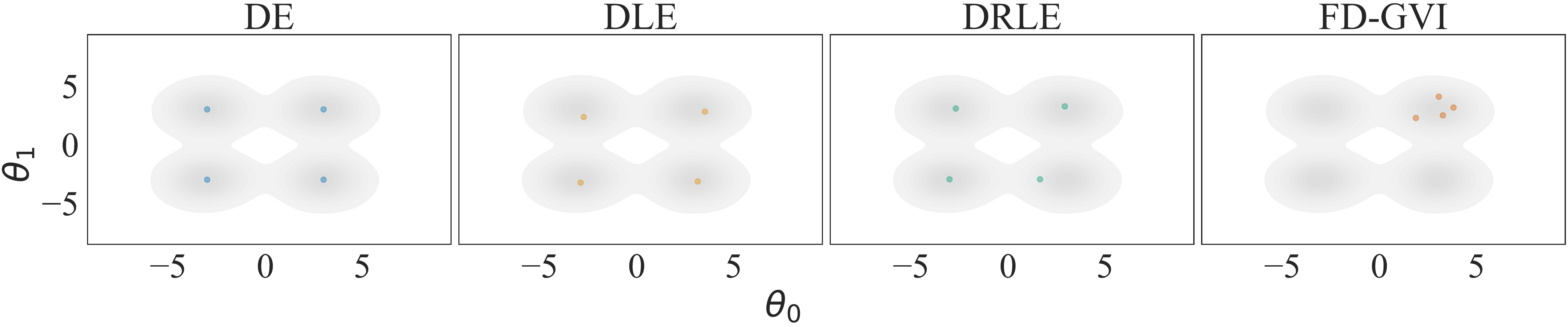}
    \caption{ We generate $N_E = 4$ particles from DE, DLE, DRLE and FD-GVI with Gaussian parametrisation. The multimodal loss $\ell$ is plotted in grey and the particles of the different methods are layered on top. The prior in this example is flat, i.e. $\log p$ and $\mu_P$ are constant. The initialisation $Q_0$ is standard Gaussian.}
    \label{fig:multimodal4}
\end{figure}

\textbf{DE, DLE, DRLE} We generate $N_E=300$ particles and make the following choices: \
\begin{itemize}
    \item Loss: $\ell(\theta) = - \log \sum_{i=1}^4 \frac{1}{4} \mathcal{N}(\theta; \mu_i, I_2), \, \theta \in \bbR^2$, $\mu_i = ( \pm 3, \pm 3)^T $,  $i=1,\hdots,4$
    \item Prior: $P$ flat and therefore $\log p(\theta) = 0$
    \item Initialisation: $Q_0 \sim \mathcal{N}(0,I_2)$
    \item Reg. parameter: $\lambda_{DLE} = 0.2$, $\lambda_{DRLE} = 0.2$, $\lambda'_{DRLE} =0.6$
    \item Step size: $\eta = 0.1$, Iterations: $K = 10,000$
    \item Kernel lengthscale, $\sigma_\kappa$, is chosen according to the median heuristic \citep{garreau2017large} based on samples from the prior $P$
\end{itemize}
Note that for a translation-invariant kernel such as the SE kernel we obtain for the flat prior $P$ that 
\begin{align}
    \mu_P(\theta) &= \int_{- \infty}^{\infty} \kappa(\theta, \theta') \, d \theta' \\
    &= \int_{- \infty}^{\infty}   \phi(\theta- \theta') \, d \theta' \\
    &= \int_{- \infty}^{\infty} \phi( \xi) \, d \xi, \label{eq:trace_kernel}
\end{align}
where the second line follows from the fact that we can write any translation-invariant kernel as $\kappa(\theta, \theta') = \phi(\theta- \theta')$ for some function $\phi: \bbR^J \to \bbR$ and the second line is simple variable substitution. If \eqref{eq:trace_kernel} is finite, the above expression is well-defined and therefore $\mu_P$ constant. Note that in particular for the SE kernel, we have $\phi(\xi) = \exp( - \|\xi\|^2 /  (2  \sigma^2_{\kappa}) )$ and therefore \eqref{eq:trace_kernel} is finite. As a consequence, we have that for a flat prior $P$ the gradient of the potential $V$ is the same for all three methods. This means that the loss $\ell$ isn't adjusted and the only difference between the three methods is the presence of repulsion and noise effects.

\begin{remark}
    The astute reader may have noticed that a flat prior $P$ is in fact not covered by our theory in Appendix \ref{ap:Existence_and_uniqueness}. The problem is that $\KL(\cdot, \mathcal{L})$, where $\mathcal{L}$ denotes the Lebesgue measure, is not positive (and not even bounded from below). To see this, choose $Q= \mathcal{N}(0, \Sigma)$ with $\Sigma = \text{diag}(\sigma_1^2, \, \sigma_2^2)$ and note that
    \begin{align}
        \KL(Q, \mathcal{L}) = \int \log q(\theta) \, q(\theta) \, d \theta 
        = - \text{H}(Q),
    \end{align}
    where $\text{H}(Q)$ denotes the differential entropy. For a Gaussian, it is known that 
    \begin{align}
        \text{H}(Q) =  \frac{1}{2} \log \big( (2 \pi e)^J \det(\Sigma) \big) = \frac{1}{2} \big( \log(2 \pi e)^J + \log(\sigma^2_1) + \log(\sigma^2_2) \big)
    \end{align}
    and therefore if either $\sigma^2_1 \to \infty$ or $\sigma^2_2 \to \infty$ then $\KL(Q, \mathcal{L}) \to - \infty$. However, note that this difficulty is rather technical in nature and can easily be remedied. Instead of $\mathcal{L}$, we could have chosen the uniform prior $P \sim U(-10^{100}, 10^{100}) $. In this case, the positivity of $\KL(\cdot, P)$ is guaranteed by Jensen's inequality. This choice of $P$ gives---up to an additive constant---the same objective as a flat prior and up to machine precision the same kernel mean embedding $\mu_P$. It is, therefore, algorithmically irrelevant if $P$ is flat or uniform on a very large set.
\end{remark}

\textbf{FD-GVI} We use the same prior and loss as for DE, DLE and DRLE. We parameterise the variational family as independent Gaussian, i.e. 
\begin{align}
\mathcal{Q} = \big\{ \mathcal{N}(\mu, \Sigma) \, | \, \mu \in \bbR^2 , \, \Sigma = \text{diag}\big( \exp(\beta_1) , \exp(\beta_2)\big), \beta := ( \beta_1, \beta_2)^2 \in \bbR^2 \big \}.
\end{align}
We learn the variational parameters $\nu:=(\mu, \beta) \in \bbR^4$ by minimising 
\begin{align}
    \widetilde{L}(\nu) &= \int \ell(\theta) \, d Q_{\nu}(\theta) + \lambda KL(Q_\nu, P) \\
    &= \int \ell(\theta) \, d Q_{\nu}(\theta) -\lambda H\big(\mathcal{N}(\mu, \Sigma) \big) \\
    &\approx  \frac{1}{200} \sum_{j=1}^{200} \ell( \mu + \Sigma^{0.5} Z_j) - \frac{\lambda}{2} \log \big( (2 \pi e)^2 \exp(\beta_1) \exp(\beta_2) \big) \\
    &= \frac{1}{200} \sum_{j=1}^{200} \ell( \mu + \Sigma^{0.5} Z_j) - \frac{\lambda}{2} \big( \beta_1 + \beta_2 \big) + const, 
\end{align}
where $Z_1, \hdots, Z_{200} \sim \mathcal{N}(0, I_2)$ and $H\big(\mathcal{N}(\mu, \Sigma) \big)$ denotes the differential entropy of the normal distribution. For the regularisation parameter, we chose $\lambda=0.5$.

\subsection{Toy example: more modes than particles}
We generate $N_E=20$ particles and make the following choices: \
\begin{itemize}
    \item Loss: $\ell(\theta) = - |\sin(\theta)| $, $\theta \in [-M \pi , M \pi]$, with $M=1000$
    \item Prior: $P$ flat and therefore $\log p(\theta) =0$ and $\mu_P = \text{const.}$ (cf. Appendix \ref{ap:multimode_loss})
    \item Initialisation: $Q_0  \sim U(-M \pi , M \pi)$
    \item Reg. parameter: $\lambda_{DLE} = 0.001$, $\lambda_{DRLE} = 0.001$, $\lambda'_{DRLE} =0.6 $
    \item Step size: $\eta = 0.01$, Iterations: $K = 1.000$
    \item Kernel lengthscale, $\sigma_\kappa$, is chosen according to the median heuristic \citep{garreau2017large} based on samples from the prior $P$
\end{itemize}
Note that $\ell$ has $2M=2000$ local minima at locations 
\begin{align}
    m_i := \frac{\pi}{2} + i \pi, \quad i \in \{-M,\hdots, 0 , \hdots, (M-1)\}.
\end{align}

Due to the flat prior $\nabla V = \nabla \ell$ for all three methods. We observe that it is hard to distinguish the methods since most particles are in their local modes by themselves.

\subsection{UCI Regression}
The UCI data sets are licensed under Creative Commons Attribution 4.0 International license (CC BY 4.0).
Following \cite{lakshminarayanan2017simple}, we train 5 one-hidden-layer neural networks $f_{\theta}$ with 50 hidden nodes for 40 epochs. We split each data set into train (81\% of samples), validation (9\% of samples), and test set (10\% of samples).
Based on the best hyperparameter runs (according to a Gaussian NLL) found via grid search on a validation data set, we make the following choices: \
\begin{itemize}
    \item Loss: $\ell(\theta) = \frac{1}{N} \sum_{n=1}^{N}(f_{\theta}(x_n)-y_n)^{2}$ where $\big\{ x_n, y_n \big\}_{n=1}^N$ are paired observations.
    \item Prior: $P \sim \mathcal{N}(0,1)$
    \item Initialisation: Kaiming intilisation, i.e. for each layer $l\in\{1,...L\}$ that maps features with dimensionality $n_{l-1}$ into dimensionality $n_l$, we sample $Q_{l,0} \sim \mathcal{N}(0,2/n_l)$
    \item Reg. parameter: $\lambda_{DLE} = 10^{-4}$, $\lambda_{DRLE} = 10^{-4}$, $\lambda'_{DRLE} =10^{-2}$
    \item Step size: $\eta = 0.1$, Iterations: $K = 10,000$
    \item Kernel lengthscale, $\sigma_\kappa$, is chosen according to the median heuristic \citep{garreau2017large} based on samples from the prior $P$
\end{itemize}

\subsection{Compute}
While the final experimental results can be run within approximately an hour on a single GeForce RTX 3090
GPU, the complete compute needed for the final results, debugging runs, and sweeps amounts to around 9 days.

\end{document}